  \providecommand\BibTeX{{%
    \normalfont B\kern-0.5em{\scshape i\kern-0.25em b}\kern-0.8em\TeX}}}
\newcommand{\cmark}{\ding{51}}
\newtheorem{definition}{Definition}
\newtheorem{assumption}{Assumption}
\newenvironment{remark}[1]{%
  \par\medskip\noindent
  \textbf{Remark #1}
}{%
  \par\medskip
}
\newlist{desclist}{description}{3}
\setlist[desclist,1]{format=\preitem\bfseries,leftmargin=\widthof{\preitem},style=sameline}
\newcommand\preitem{\mdseries\textbullet\space}
\newcommand{\name}{{\textsc{HierTail}}}
\newcommand\latinabbrev[1]{
  \peek_meaning:NTF . {
    #1\@}%
  { \peek_catcode:NTF a {
      #1.\@ }%
    {#1.\@}}}
\def\eg{\latinabbrev{e.g}}
\def\ie{\latinabbrev{i.e}}
\begin{document}

\title{Mastering Long-Tail Complexity on Graphs: Characterization, Learning, and Generalization}

\author{Haohui Wang}
\affiliation{%
  \institution{Virginia Tech}
  \country{USA}
}

\author{Baoyu Jing}
\affiliation{%
  \institution{University of Illinois at Urbana Champaign}
  \country{USA}
}

\author{Kaize Ding}
\affiliation{%
  \institution{Northwestern University}
  \country{USA}
}

\author{Yada Zhu}
\affiliation{%
  \institution{IBM Research}
  \country{USA}
}

\author{Wei Cheng}
\affiliation{%
  \institution{NEC Labs}
  \country{USA}
}

\author{Si Zhang}
\affiliation{%
  \institution{Meta}
  \country{USA}
}

\author{Yonghui Fan}
\affiliation{%
  \institution{Arizona State University}
  \country{USA}
}

\author{Liqing Zhang}
\affiliation{%
  \institution{Virginia Tech}
  \country{USA}
}

\author{Dawei Zhou}
\affiliation{%
  \institution{Virginia Tech}
  \country{USA}
}


\begin{abstract}
In the context of long-tail classification on graphs, the vast majority of existing work primarily revolves around the development of model debiasing strategies, intending to mitigate class imbalances and enhance the overall performance.
Despite the notable success, there is very limited literature that provides a theoretical tool for characterizing the behaviors of long-tail classes in graphs and gaining insight into generalization performance in real-world scenarios. 
To bridge this gap, we propose a generalization bound for long-tail classification on graphs by formulating the problem in the fashion of multi-task learning, \ie, each task corresponds to the prediction of one particular class. Our theoretical results show that the generalization performance of long-tail classification is dominated by the overall loss range and the task complexity. 
Building upon the theoretical findings, we propose a novel generic framework \name\ for long-tail classification on graphs. In particular, we start with a hierarchical task grouping module that allows us to assign related tasks into hypertasks and thus control the complexity of the task space; then, we further design a balanced contrastive learning module to adaptively balance the gradients of both head and tail classes to control the loss range across all tasks in a unified fashion. 
Extensive experiments demonstrate the effectiveness of \name\ in characterizing long-tail classes on real graphs, which achieves up to 12.9\% improvement over the leading baseline method in accuracy.
We publish our data and code at~\url{https://anonymous.4open.science/r/HierTail-B961/}.
\end{abstract}



\keywords{Long-tail Learning, Generalization, Graph Mining}



\maketitle

\section{Introduction}
The graph serves as a fundamental data structure for modeling a diverse range of relational data, ranging from financial transaction networks~\cite{wang2019semi, dou2020enhancing} to social science~\cite{fan2019graph}.
In recent years, Graph Neural Networks (GNNs) have achieved outstanding performance on node classification tasks~\cite{Zhou23fast, Yang2023simple, xu2021infogcl} because of their ability to learn expressive representations from graphs. 
Despite the remarkable success, the performance of GNNs is primarily attributed to the availability of high-quality and abundant annotated data~\cite{Yang2020DPGN, garcia2017few, hu2019strategies, Kim2019edge}.
Nevertheless, unlike many graph benchmark datasets developed in the lab environment, it is often the case that many high-stake domains naturally exhibit a long-tail distribution, \ie., a few head classes (the majority classes) 
with rich and well-studied data
and 
 massive tail classes (the minority classes) 
 with scarce
 and under-explored
 data. For example, in financial transaction networks, a few head classes correspond to the normal transaction types (\eg, credit card payment, wire transfer), and the numerous tail classes can represent a variety of fraudulent transaction types (\eg, money laundering, synthetic identity transaction). Despite the rare occurrences of fraudulent transactions, detecting them can prove crucial~\cite{singleton2010fraud, akoglu2015graph}.
Another example is the collaboration network. As shown in Figure~\ref{fig:resultPerClass}, the Cora-Full network~\cite{bojchevski2018deep} encompasses 70 classes categorized by research areas, showcasing a starkly imbalanced data distribution—from as few as 15 papers in the least represented area to as many as 928 papers in the most populated one. The task complexity (massive number of classes, data imbalance) coupled with limited supervision imposes significant computational challenges on GNNs. 

\begin{figure}[t]
  \centering
  \scalebox{1.0}{
  \includegraphics[width=\linewidth]{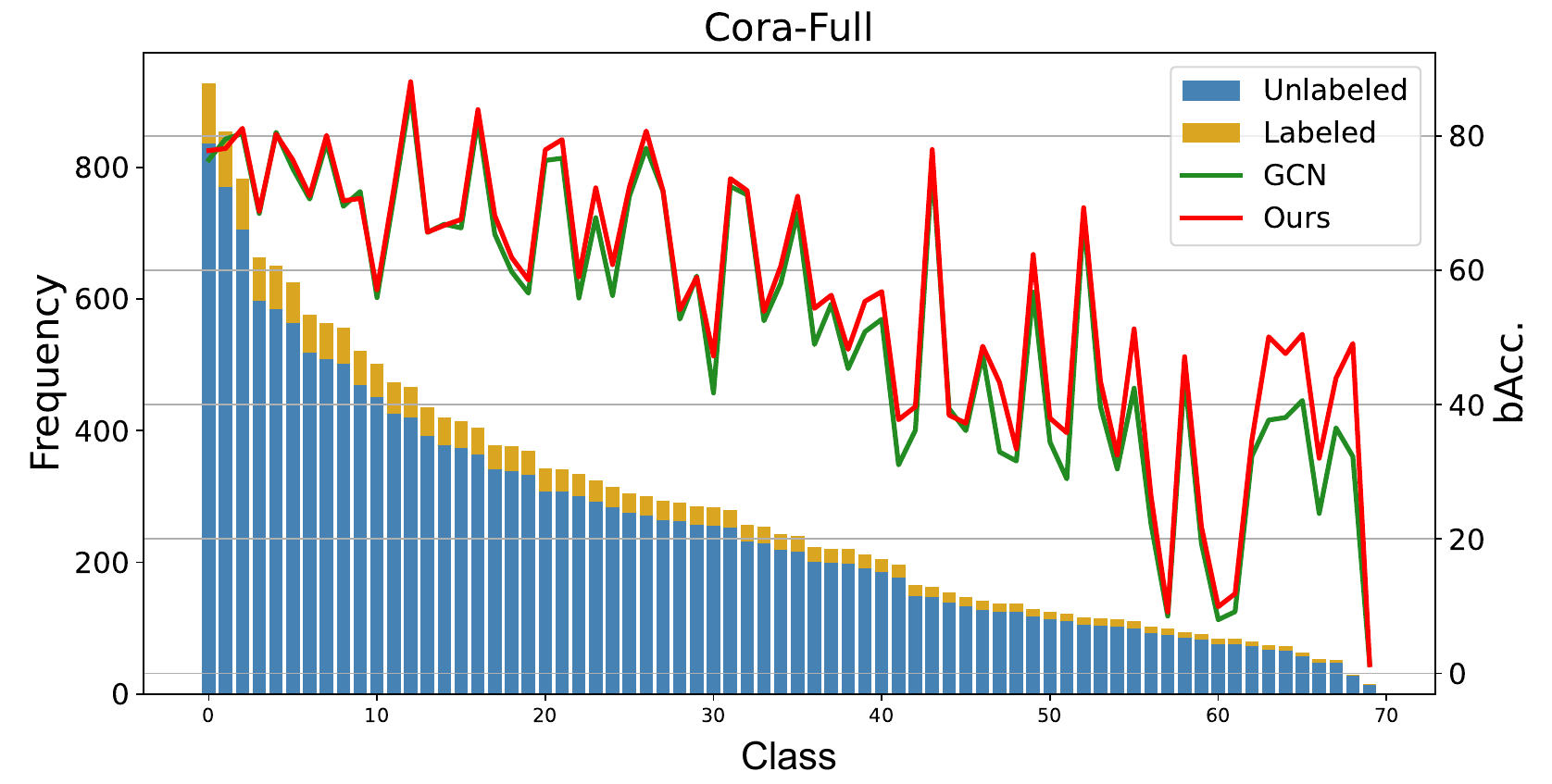}}
  \caption{An illustrative figure of long-tail distribution in the collaboration network (Cora-Full), where the green and red curves show balanced accuracy (bAcc) (\%) of GCN and \name\ for node classification on each class. Blue and yellow bars represent the class frequency of unlabeled and labeled nodes.}
  \label{fig:resultPerClass}
\end{figure}

Important as it could be, there is limited literature that provides a theoretical grounding to characterize the behaviors of long-tail classes on graphs and understand the generalization performance in real environments. 
To bridge the gap, we provide insights and identify three fundamental challenges in the context of long-tail classification on graphs. 
First (\emph{\textbf{C1. Highly skewed data distribution}}), the data exhibits extremely skewed class memberships. Consequently, the head classes contribute more to the learning objective and can be better characterized by GNNs; the tail classes contribute less to the objective and thus suffer from higher systematic errors~\cite{zhang2021deep}.
Second (\emph{\textbf{C2. Label scarcity}}), due to the rarity and diversity of tail classes in nature, it is often more expensive and time-consuming to annotate tail classes  than head classes~\cite{pelleg2004active}. What is worse, training GNNs from scarce labels may result in representation disparity and inevitable errors
~\cite{zhou2019meta, wang2021mixup}, which amplifies the difficulty of debiasing GNN from the highly skewed data distribution. 
Third (\emph{\textbf{C3. Task complexity}}), with the increasing number of classes under the long-tail setting, the difficulty of separating the margin~\cite{hearst1998support} of classes is dramatically increasing. There is a high risk of encountering overlapped regions between classes with low prediction confidence~\cite{zhang2013review, mittal2021decaf}.
To deal with the long-tail classes, the existing literature mainly focuses on augmenting the observed graph~\cite{zhao2021graphsmote, wu2021graphmixup, qu2021imgagn}  or reweighting the class-wise loss functions~\cite{yun2022lte4g, shi2020multi}. 
Despite the existing achievements, a natural research question is that: \emph{can we further improve the overall performance by learning more knowledge from both head classes and tail classes?}

To answer the aforementioned question, we provide the generalization bound of long-tail classification on graphs. The key idea is to formulate the long-tail classification problem in the fashion of multi-task learning~\cite{song2022efficient}, \ie, each task corresponds to the prediction of one specific class. In particular, the generalization bound is in terms of the range of losses across all tasks and the complexity of the task space. Building upon the theoretical findings, we propose \name, a generic learning framework to characterize long-tail classes on graphs. 
Specifically, motivated by controlling the complexity of the task space, we employ a hierarchical structure for task grouping to tackle C2 and C3. It assigns related tasks into hypertasks, allowing the information learned in one class to help train another class, particularly benefiting tail classes.
Furthermore, we implement a balanced contrastive module to address C1 and C2, which effectively balances the gradient contributions across head classes and tail classes. This module reduces the loss of tail tasks while ensuring the performance of head tasks, thus controlling the range of losses across all tasks. 

The main contributions of this paper are summarized below.
\begin{desclist}
\item \textbf{Problem Definition.} We formalize the long-tail classification problem on graphs and develop a novel metric named long-tailedness ratio for characterizing properties of long-tail distributed data. 
\item \textbf{Theory.} We derive a generalization bound for long-tail classification on graphs, which inspires our proposed framework.
\item \textbf{Algorithm.} We propose a novel approach named \name\ that (1) extracts shareable information across classes via hierarchical task grouping and (2) balances the gradient contributions of head classes and tail classes. 
\item \textbf{Evaluation.} We systematically evaluate the performance of \name\ with eleven baseline models on six real-world datasets for long-tail classification on graphs. The results demonstrate the effectiveness of \name\ and verify our theoretical findings. We publish our data and code at an anonymous Github~\footnote{https://anonymous.4open.science/r/HierTail-B961/}.
\end{desclist}

The rest of this paper is organized as follows. We provide the problem definition in Section 2, followed by the proposed framework in Section 3. Section 4 discusses the experimental setup and results, followed by a literature review in Section 5. Finally, we conclude the paper in Section 6.

\section{Preliminary}\label{sec:preliminary}
In this section, we introduce the background and give the formal problem definition. Table~\ref{TB:Notations} in Appendix~\ref{sec:notation} summarizes the main notations used in this paper.
We represent a graph as $\mathcal{G} = (\mathcal{V}, \mathcal{E}, \mathbf{X})$, where $\mathcal{V}$ represents the set of nodes, $\mathcal{E} \subseteq \mathcal{V} \times \mathcal{V}$ represents the set of edges, $\mathbf{X} \in \mathbb{R}^{n \times d}$ represents the node feature matrix, $n$ is the number of nodes, and $d$ is the feature dimension. $\mathbf{A}\in \{0, 1\}^{n \times n}$ is the adjacency matrix, where $\mathbf{A}_{ij} = 1$ if there is an edge $\mathbf{e}_{ij}\in \mathcal{E}$ from $\mathbf{v}_i$ to $\mathbf{v}_j$ in $\mathcal{G}$ and $\mathbf{A}_{ij} = 0$ otherwise. $\mathcal{Y}=\{y_1, \ldots, y_n\}$ is the set of labels, $y_i \in \{1, \ldots, T\}$ is the label of the $i^\text{th}$ node. There are $T$ classes in total, and $T$ can be notably large.

\noindent\textbf{Long-Tail Classification} refers to the classification problem in the presence of a massive number of classes, highly skewed class-membership distribution, and label scarcity. Here we let $\mathcal{D}=\{(\mathbf{x}_i, y_i)\}_{i=1}^{n}$ represent a dataset with long-tail distribution. We define $\mathcal{D}_t$ as the set of instances belonging to class $t$. Without the loss of generality, we have $\mathcal{D} = \{ \mathcal{D}_1, \mathcal{D}_2, \ldots, \mathcal{D}_T\}$, where $|\mathcal{D}_1|\geq |\mathcal{D}_2|\geq\cdots\gg |\mathcal{D}_T|$, $\sum_{t=1}^{T}|\mathcal{D}_t|=n$. Tail classes may encounter label scarcity, having few or even only one instance, while head classes have abundant instances. To measure the skewness of long-tail distribution, \citet{wu2021graphmixup} introduces the Class-Imbalance Ratio as $\frac{\min_t(|\mathcal{D}_t|)}{\max_t(|\mathcal{D}_t|)}$, i.e., the ratio of the size of the smallest minority class to the size of the largest majority class.

\begin{figure}[t]
    \centering
    \includegraphics[width=\linewidth]{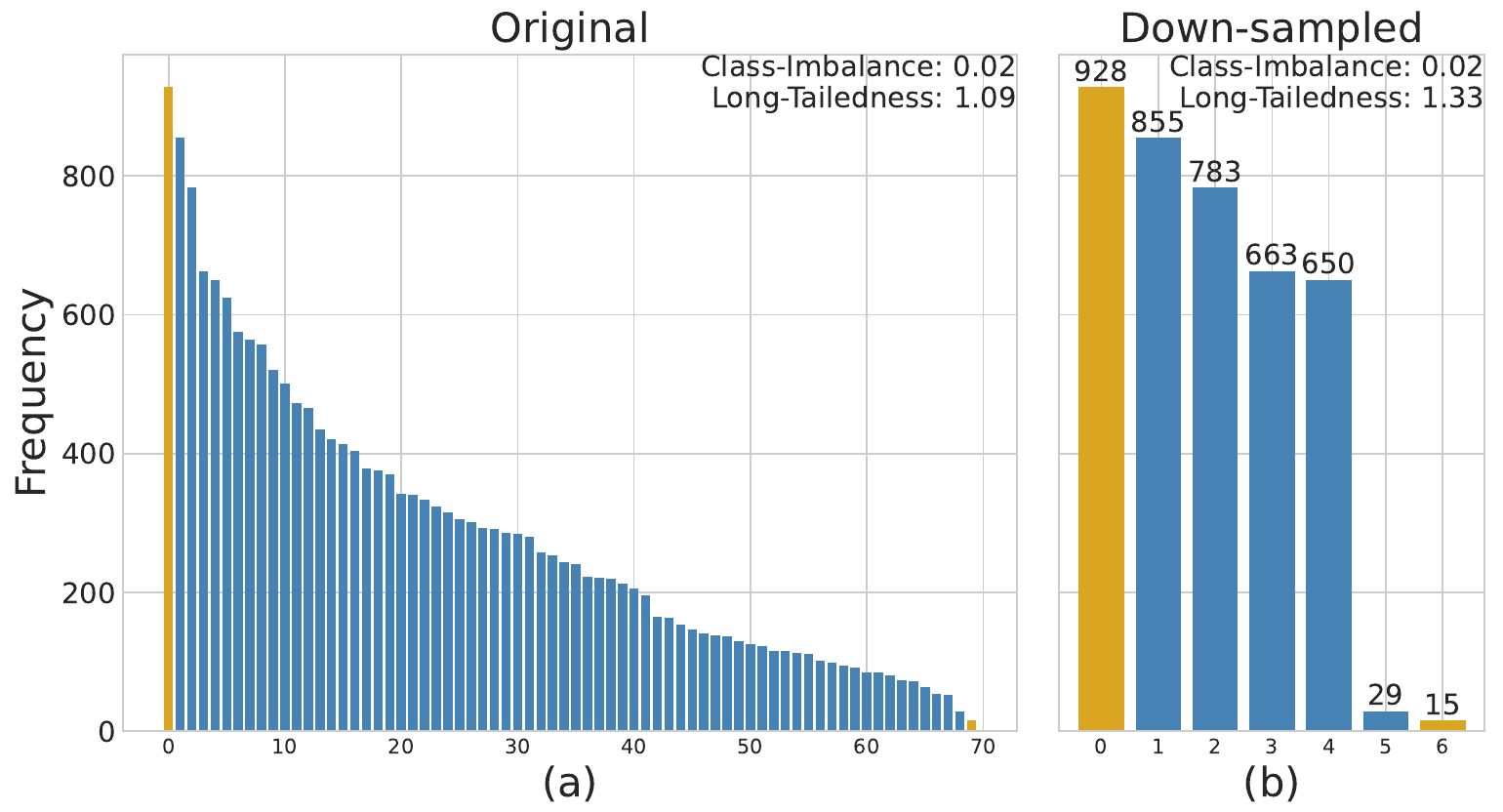}
     \caption{Comparison between two long-tail distribution metrics on (a) the hard case of the original Cora-Full dataset and (b) the easy case of the down-sampled Cora-Full dataset. We observe that the class-imbalance ratio falls short in characterizing the task complexity of two datasets, while the long-tailedness ratio does.}
     \label{fig:ratioExample}
\end{figure}

\noindent\textbf{Long-Tailedness Ratio.} Suppose we are given a graph $\mathcal{G}$ with long-tail class-membership distribution. While Class-Imbalance Ratio~\cite{wu2021graphmixup} measures the imbalance level of observed data, it overlooks the task complexity in the task of long-tail classification. As the number of classes increases, the difficulty of the classification task therefore increases. Taking the Cora-Full collaboration network as shown in Figure~\ref{fig:ratioExample} as an example, we down-sampled 7 classes from the original Cora-Full dataset. Although the class-imbalance ratio remains the same, \ie, 0.02 for both the original and down-sampled datasets, the task complexity varies significantly, \ie, 70 classes in Figure~\ref{fig:ratioExample} (a) v.s 7 classes in Figure~\ref{fig:ratioExample} (b).
In light of this, we introduce a novel quantile-based metric named the long-tailedness ratio to jointly quantify the class-imbalance ratio and task complexity for the long-tail datasets.
The formal definition of the long-tailedness ratio is provided as follows:
\begin{definition}[Long-Tailedness Ratio]\label{def:longtailRatio}
Suppose we have a dataset $\mathcal{D}$ with long-tail classes that follow a descending order in terms of the number of instances. The long-tailedness ratio is 
\begin{equation}
    \texttt{Ratio}_{LT}(p)=\frac{Q(p)}{T-Q(p)},
\end{equation}
where $Q(p)=min\{y: Pr(\mathcal{Y}\leq y)=p, 1\leq y\leq T\}$ is the quantile function of order $p\in (0,1)$ for variable $\mathcal{Y}$, $T$ is the number of classes. The numerator represents the number of classes to which $p$ percent instances belong, and the denominator represents the number of classes to which the else $(1-p)$ percent instances belong in $\mathcal{D}$.
\end{definition}

Essentially, the long-tailedness ratio implies the task complexity of long-tail classification and characterizes two properties of $\mathcal{D}$: (1) class-membership skewness, (2) \# of classes. Intuitively, the higher the skewness of data distribution, the lower the ratio will be; the higher the complexity of the task space (\ie, massive number of classes), the lower the long-tailedness ratio. 
Figure 2 provides a case study on the Cora-Full dataset by comparing the long-tailedness ratio and class-imbalance ratio~\cite{wu2021graphmixup}. In general, we observe that the long-tailedness ratio better characterizes the differences on the original Cora dataset ($\texttt{Ratio}_{LT}(0.8) = 1.09$) and its down-sampled dataset ($\texttt{Ratio}_{LT}(0.8) = 1.33$). In our implementation, we choose $p=0.8$ following the Pareto principle~\cite{pareto1971manual}. 
In Appendix~\ref{sec:ratio}, we additionally offer insights into the utilization of the long-tailedness ratio for the enhanced comprehension of long-tail datasets and as a guiding factor for model selection in practice.

\section{Algorithm}\label{sec:model}

\subsection{Theoretical Analysis}\label{sec:theory}
In this paper, we consider the long-tail problems with data imbalance and massive classes, an area with limited theoretical exploration. For the first time, we propose to reformulate the long-tail problems in the manner of multi-task learning, thereby leveraging the theoretical foundation of multi-task learning to gain insights into long-tail problems. In particular, we view the classification for each class as a learning task\footnote{Here we consider the number of tasks to be the number of classes for simplicity, while in Sec.~\ref{sec:framework} the number of tasks can be smaller than the number of classes after the task grouping operation.} on graph $\mathcal{G}$. A key assumption of multi-task learning is task relatedness, \ie, relevant tasks should share similar model parameters. Similarly, in long-tail learning, we aim to learn the related tasks (classes) concurrently to potentially enhance the performance of each task (classes). We propose to formulate the hypothesis $g$ of long-tail model as $g=\{f_t\}_{t=1}^T\circ h$, where $\circ$ is the  functional composition, $g_t(x) = f_t\circ h(x)\equiv f_t(h(x))$ for each classification task. The function $h: \mathbf{X} \rightarrow \mathbb{R}^{K}$ is the representation extraction function shared across different tasks, $f: \mathbb{R}^{K} \rightarrow \mathbb{R}$ is the task-specific predictor, and $K$ is the dimension of the hidden layer. The training set for the $t^\text{th}$ task $\mathcal{D}_t = \{(\mathbf{x}^{t}_{i}, y^{t}_{i})\}_{i=1}^{n_t}$ contains $n_t$ annotated nodes, $\mathbf{x}^{t}_{i}$ is the $i^\text{th}$ training node in class $t$, and $y^{t}_{i} = t$ for all $i$. The task-averaged risk of representation $h$ and predictors $f_1, \ldots, f_T$ is defined as $\epsilon\left(h, f_1, \ldots, f_T\right)$, and the corresponding empirical risk is defined as $\hat{\epsilon}\left(h, f_1, \ldots, f_T\right)$.
To characterize the performance of head and tail classes in our problem setting, we formally define the loss range of $f_1, \ldots, f_T$ in Definition~\ref{def:range}:
\begin{definition}[Loss Range]
\label{def:range}
The loss range of the $T$ predictors $f_1, \ldots, f_T$ is defined as the difference between the lowest and highest values of the loss function across all tasks.
\begin{equation}
\begin{aligned}
\texttt{Range}(f_1, \ldots, f_T)&=\max_{t}\frac{1}{n_t}\sum_{i=1}^{n_t}l(f_t(h(\mathbf{x}^t_{i})), y^t_{i}) \\
&- \min_{t}\frac{1}{n_t}\sum_{i=1}^{n_t}l(f_t(h(\mathbf{x}^t_{i})), y^t_{i}),
\end{aligned}
\end{equation}
where $l(\cdot, \cdot)$ is a loss function. For the node classification task, $l(\cdot, \cdot)$ refers to the cross-entropy loss.
\end{definition}    

In the scenario of long-tail class-membership distribution, there often exists a tension between maintaining head class performance and improving tail class performance~\cite{zhang2021deep}. Minimizing the losses of the head classes may lead to a biased model, which increases the losses of the tail classes. Under the premise that the model could keep a good performance on head tasks, we conjecture that controlling the loss range could improve the performance on tail tasks and lead to a better generalization performance of the model. 
To verify our idea, we drive the loss range-based generalization error bound for long-tail classes on graphs in the following Theorem~\ref{thm:boundThm}.
\begin{restatable}[Generalization Error Bound]{theorem}{boundThm}
\label{thm:boundThm}
Given the node embedding extraction function $h\in \mathcal{H}$ and the task-specific classifier $f_1, \ldots, f_T \in \mathcal{F}$, with probability at least $1-\delta, \delta \in [0,1]$, we have
\begin{equation}
\begin{aligned}
    \mathcal{E} - \hat{\mathcal{E}}\leq & \sum_{t} \left(\frac{c_{1} \rho R G(\mathcal{H}(\mathbf{X}))}{n_t T} + \sqrt{\frac{9 \ln (2 / \delta)}{2 n_t T^2}}\right.\\
    + & \left. \frac{c_{2} \sup_{h \in \mathcal{H}}\|h(\mathbf{X})\| \texttt{Range}(f_1, \ldots, f_T)}{n_t T} \right),
\end{aligned}
\end{equation}
where $\mathbf{X}$ is the node feature, $T$ is the number of tasks, $n_t$ is the number of nodes in task $t$, $R$ denotes the Lipschitz constant of fuctions in $\mathcal{F}$,  loss function $l(\cdot, \cdot)$ is $\rho$-Lipschitz,
$G(\cdot)$ denotes the Gaussian complexity, and $c_{1}$ and $c_{2}$ are universal constants.
\end{restatable}

\begin{proof}
The proof is provided in Appendix~\ref{sec:proof}.
\end{proof}

\begin{figure*}[t]
  \centering
  \includegraphics[width=\linewidth]{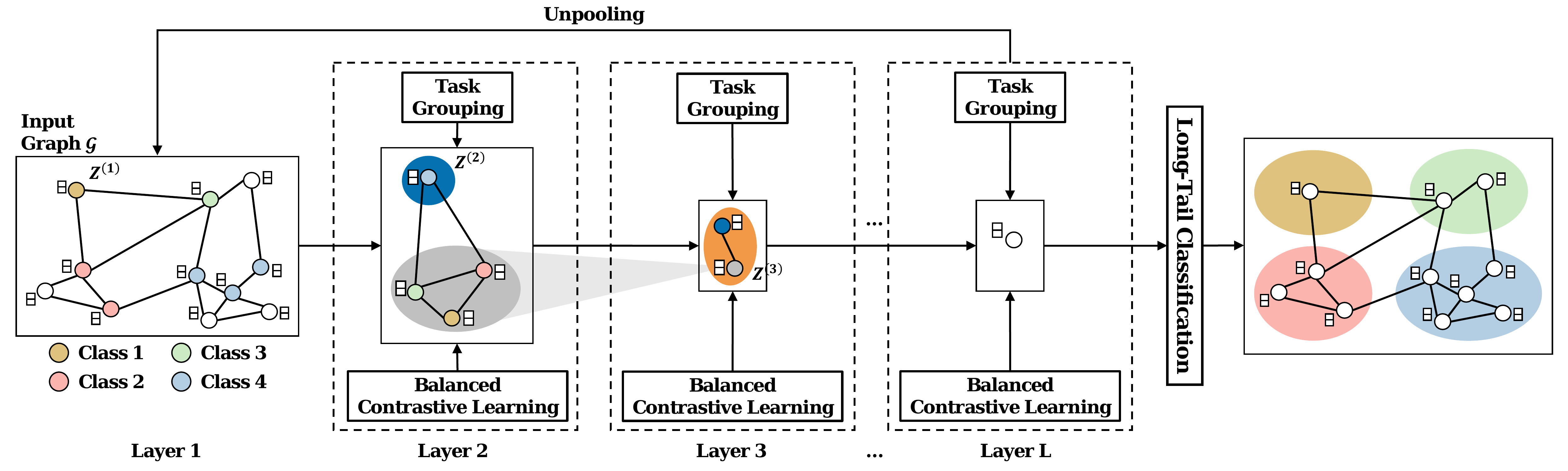}
  \caption{The proposed \name\ framework with $L$ task-grouping layers.}
  \label{fig:framework}
\end{figure*}

\begin{remark}{\#1:}
Theorem~\ref{thm:boundThm} implies that the generalization error is dominated by three key factors, including the Gaussian complexity of the shared representation extraction $h\in \mathcal{H}$, the loss range of the task-specific predictors $f_1, \ldots, f_T$, the number of classes with varying number of samples.
\end{remark}
\begin{remark}{\#2:}
We can derive $\sum_{t}\frac{c_{1} \rho R G(\mathcal{H}(\mathbf{X}))}{n_t T} \geq \frac{T c_{1} \rho R G(\mathcal{H}(\mathbf{X}))}{\sum_{t}n_t} \geq \frac{c_{1} \rho R G(\mathcal{H}(\mathbf{X}))}{\sum_{t}n_t}$ by utilizing Jensen's Inequality. The observation illustrates that when grouping all samples to one task rather than grouping all samples to $T$ tasks, the first term of the upper bound becomes tight. Our conclusion for long-tail learning is different from multi-task learning in that each task corresponds to a fixed number of observed samples~\cite{Maurer16Benefit}. Conversely, in long-tail learning, task complexity is determined by the number of classes $T$, each class exhibiting varying numbers of samples $n_1, \ldots, n_T$. Hence, controlling the complexity of the task space could improve the generalization performance, which motivates the design of the hierarchical task grouping module in Section 3.2.
\end{remark}

\begin{remark}{\#3:}
Reducing the loss range $\texttt{Range}(f_1, \ldots, f_T)$ for all tasks results in a tight second term of the upper bound. This insight inspired the development of long-tail balanced contrastive learning module in Section 3.2, which aims to obtain better task-specific predictors $f_1^{\prime}, \ldots, f_T^{\prime}$ with $\texttt{Range}(f_1^{\prime}, \ldots, f_T^{\prime})<\texttt{Range}(f_1, \ldots, f_T)$. 
\end{remark}

\subsection{\name\ Framework}\label{sec:framework}
The overview of \name\ is presented in Figure~\ref{fig:framework}, which consists of two major modules: M1. hierarchical task grouping and M2. long-tail balanced contrastive learning. 
Specifically, the Remark \#2 of Theorem~\ref{thm:boundThm} inspires that controlling the task complexity with massive and imbalanced classes can potentially improve the generalization performance.
Thus, M1 is designed to control the complexity of task space and capture the information shared across tasks by grouping tasks into the hypertasks to improve overall performance. 
As highlighted in Remark \#3 above, controlling the loss range could improve the generalization performance. Therefore, in M2, we designed a long-tail balanced contrastive loss to balance the head classes and the tail classes.
In the following subsections, we dive into the two modules of \name\ in detail.

\noindent\textbf{M1. Hierarchical Task Grouping.} We propose to address C2 (Label scarcity) and C3 (Task complexity) by leveraging the information learned in one class to help train another class. We implement task grouping to share information across different tasks via hierarchical pooling~\cite{ying2018hierarchical, gao2019graph}, different from previous work which conducts node clustering and ignores the challenges in long-tail learning~\cite{KoCJHLL23grouping}. The core idea of our hierarchical pooling is to choose the important nodes (tasks) and preserve the original connections between the chosen nodes (tasks) and edges to generate a coarsened graph. As shown in Figure~\ref{fig:M1}, the task grouping operation is composed of two steps: \textit{Step 1.} we group nodes (tasks) into several tasks (hypertasks) and \textit{Step 2.} learn the embeddings of the task (hypertask) prototypes. This operation can be easily generalized to the $l^\text{th}$ layers, which leads to the hierarchical task grouping.

Specifically, we first generate a low-dimensional node embedding vector for each node $\mathbf{Z}^{(1)}=(\mathbf{z}^{(1)}_{1}, \ldots, \mathbf{z}^{(1)}_{n})$ via graph convolutional network (GCN)~\cite{kipf2016semi} layers. 
Next, we group nodes into tasks (with the same number of classes) and then group these tasks into hypertasks by stacking several task grouping layers. The $l^\text{th}$ task grouping layer is defined as:
\begin{equation}
\label{equ:gPool}
\begin{aligned}
    &\mathcal{I}  =\texttt{TOP-RANK}(\texttt{PROJ}(\mathbf{Z}^{(l)}), T^{(l)}), \\
    &\mathbf{X}^{(l+1)}  =\mathbf{Z}^{(l)}(\mathcal{I},:) \odot\left(\texttt{PROJ}(\mathbf{Z}^{(l)})\mathbf{1}_{d}^{T}\right), \\
    &\mathbf{A}^{(l+1)}  =\mathbf{A}^{(l)}(\mathcal{I}, \mathcal{I}),
\end{aligned}
\end{equation}
where $l = 1, \ldots L$ is the layer of hierarchical task grouping. We generate a new graph with selected important nodes, where these nodes serve as the prototypes of tasks (hypertasks), and $\mathcal{I}$ is the indexes of the selected nodes. $\texttt{PROJ} (\cdot, \cdot)$ is a projection function to score the node importance by mapping each embedding $\mathbf{z}_i^{(l)}$ to a scalar. $\texttt{TOP-RANK}$ identifies the top $T^{(l)}$ nodes with the highest value after projection. The connectivity between the selected nodes remains as edges of the new graph, and the new adjacency matrix $\mathbf{A}^{(l+1)}$ and feature matrix $\mathbf{X}^{(l+1)}$ are constructed by row and/or column extraction. The subsequent GCN layer outputs the embeddings $\mathbf{Z}^{(l+1)}$ of the new graph based on $\mathbf{X}^{(l+1)}$ and $\mathbf{A}^{(l+1)}$. Notably, $\mathbf{Z}^{(1)}$ is the node embeddings, $\mathbf{Z}^{(2)}$ is the embeddings of the task prototypes corresponding to the classes, and $\mathbf{Z}^{(l)} (l>2)$ is the hypertask prototype embeddings.

The number of tasks $T^{(l)}$ represents the level of abstraction of task grouping, which decreases as the task grouping layer gets deeper. In high-level layers ($l>1$), the number of tasks may be smaller than the number of classes. By controlling $T^{(l)}$, information shared across tasks can be obtained to alleviate the \textit{task complexity}, which is associated with characterizing an increasing number of classes under varying number of samples.
Meanwhile, 
nodes that come from different classes with high-level semantic similarities can be assigned to one task. By sharing label information with other different classes within the same hypertask, the problem of \textit{label scarcity} can be alleviated. In layer 2 (Figure~\ref{fig:M1}), we consider a special case of 2 head classes (\ie, class 2 and 4) and 2 tail classes (\ie, class 1 and 3). By grouping the prototypes of classes 1, 2, and 3 into the same hypertask at a later task grouping layer, our method will automatically assign a unique hypertask label to all nodes belonging to the three classes.

In order to well capture the hierarchical structure of tasks and propagate information across different tasks, we need to restore the original resolutions of the graph to perform node classification. Specifically, we stack the same number of unpooling layers as the task grouping layers, which up-samples the features to restore the original resolutions of the graph.
\begin{equation}\label{equ:gUnpool}
\mathbf{X}^{(l+1)}=\texttt{DIST}\left(0_{n \times d}, \mathbf{X}^{(l+1)}, \mathcal{I}\right),
\end{equation}
where $\texttt{DIST}$ restores the selected graph to the resolution of the original graph by distributing row vectors in $X^{(l+1)}$ into matrix $0_{n \times d}$ based on the indices $\mathcal{I}$, $0_{n \times d}$ represents the initially all-zeros feature matrix, $X^{(l+1)} \in \mathbb{R}^{T^{(l)} \times d}$ represents the feature matrix of the current graph, and $\mathcal{I}$ represents the indices of the selected nodes in the corresponding task grouping layer. Finally, the corresponding blocks of the task grouping and unpooling layers are skip-connected by feature addition, and the final node embeddings are passed to an MLP layer for final predictions. 

\begin{figure}[t]
    \centering
    \includegraphics[width=0.9\linewidth]{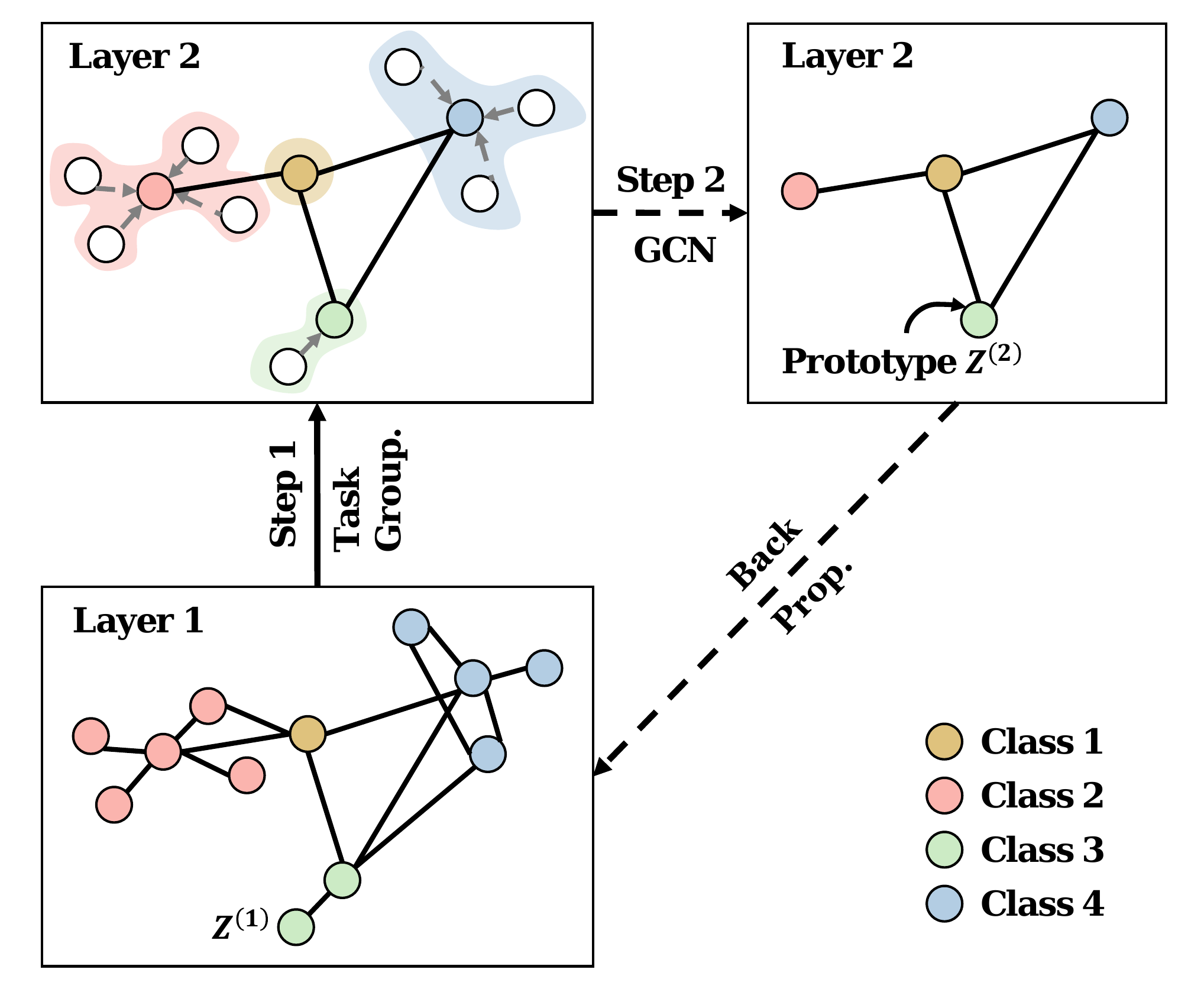}
    \caption{An illustrative figure for M1 with two task-grouping layers. Step 1: nodes are first grouped into four tasks (each representing a class). Step 2: We learn the embeddings of the task prototypes. Finally, the node embeddings are updated by back-propagation.}
    \label{fig:M1}
\end{figure}

\noindent\textbf{M2. Long-Tail Balanced Contrastive Learning.} 
To address C1 (High-skewed data distribution) and C2 (Label scarcity), we propose a principled graph contrastive learning strategy for M1 (Hierarchical task grouping) by passing labels across multiple hierarchical layers. Unlike Graph contrastive learning (GCL)~\cite{xu2021infogcl, hassani2020contrastive, qiu2020gcc, zhu2021an} for learning unsupervised representation of graph data, in this paper, we propose to incorporate supervision signals into each layer of graph contrastive learning. 
Specifically, we employ supervised contrastive loss $\mathcal{L}_{SCL}$ on the labeled node to augment the original graph. It allows joint consideration of head and tail classes, which balances their contributions and alleviates the challenge of \textit{high-skewed data distribution}.
Additionally, we employ balanced contrastive loss $\mathcal{L}_{BCL}$ on each layer of \name. We group all nodes on the graph into several tasks, which facilitates label information to be passed among similar nodes during task grouping. These tasks are subsequently grouped into higher-level hypertasks, which enables label sharing across layers. Through the sharing of label information across nodes and layers, we effectively mitigate the challenge of \textit{label scarcity} in tail classes. 

Next, we introduce supervised contrastive loss $\mathcal{L}_{SCL}$ on the restored original graph. It makes node pairs of the same class close to each other while pairs not belonging to the same class far apart. The mathematical form of the loss function $\mathcal{L}_{SCL}$ on the $i^\text{th}$ node $\mathbf{z}_i$ can be expressed as follows:
\begin{equation}\label{equ:loss_scl}
\begin{aligned}
    \mathcal{L}_{SCL}(\mathbf{z}_i)
    &=-\frac{1}{n_{t}-1} \times \sum_{j\in\mathcal{V}_t\backslash i} \\ &\log \frac{\exp \left(\mathbf{z}_{i} \cdot \mathbf{z}_{j}/\tau\right)}{\sum_{1\leq q\leq T} \frac{1}{n_{q}} \sum_{k\in\mathcal{V}_q} \exp \left(\mathbf{z}_{i} \cdot \mathbf{z}_{k}/\tau\right)}, 
\end{aligned}
\end{equation}
where $\mathbf{z}_i$ belongs to class $t$, $\mathcal{V}_t$ denotes all the nodes belonging to class $t$, $z_k$ represents the embedding of the $k^\text{th}$ node, and temperature $\tau$ controls the strength of penalties on negative node. $\mathcal{L}_{SCL}$ reduces the proportion of contributions from head classes and highlights the importance of tail classes to alleviate the bias caused by high-skewed data distribution.

Moreover, we introduce balanced contrastive loss $\mathcal{L}_{BCL}$ on a coarsened graph, where each node represents a task prototype. For the $l^\text{th}$ task grouping layer, we group tasks in layer $l$ into $T^{(l)}$ hypertasks and calculate the balanced contrastive loss based on the task embeddings $\mathbf{Z}^{(l)}$ and the hypertask prototypes $\mathbf{Z}^{(l+1)}$. It pulls the task embeddings together with their corresponding hypertask prototypes and pushes them away from other prototypes. $\mathcal{L}_{BCL}$ on the $i^\text{th}$ node $\mathbf{z}_i$ can be expressed as follows\footnote{We use the same contrastive loss for each layer. To clarify, we omit layer $(l)$.}:
\begin{equation}\label{equ:loss_bcl}
\begin{aligned}
    \mathcal{L}_{BCL}(\mathbf{z}_i)&=-\frac{1}{n_{t}} \times \sum_{j\in\mathcal{V}_t\backslash i} \\ &\log \frac{\exp \left(\mathbf{z}_{i} \cdot \mathbf{z}_{j}/\tau\right)}{\sum_{1\leq q\leq T} \frac{1}{n_{q}+1} \sum_{k\in\mathcal{V}_q} \exp \left(\mathbf{z}_{i} \cdot \mathbf{z}_{k}/\tau\right)},
\end{aligned}
\end{equation}
where we suppose $\mathbf{z}_i$ belongs to hypertask $t$, here $\mathcal{V}_t$ denotes all the nodes within the $t^\text{th}$ hypertask including the hypertask prototype $\mathbf{z}^{(l+1)}_t$, $n_t$ represents the number of nodes in hypertask $t$, $\mathbf{z}_k=\mathbf{z}^{(l)}_k$ represents the embedding of the $k^\text{th}$ node, and $\tau$ is the temperature. Therefore, $\mathcal{L}_{BCL}$ solves the long-tail classification in two aspects: (1) It potentially controls the range of losses for different tasks. The $n_j + 1$ term in the denominator averages over the nodes of each task so that each task has an approximate contribution for optimizing; (2) The set of $T$ hypertask prototypes is added to obtain a more stable optimization for balanced contrastive learning.
In summary, M2 combines supervised contrastive loss and balanced contrastive loss. With M2, we alleviate the label scarcity by passing label information across all nodes and all layers; and solve the data imbalance by balancing the performance of the head and tail classes.

\noindent\textbf{Overall Objective Function. }Our objective is to minimize the node classification loss (for few-shot annotated data), the unsupervised balanced contrastive loss (for task combinations in each layer), and the supervised contrastive loss (for categories), which is defined as:
\begin{equation}\label{equ:loss}
    \mathcal{L}_{total}=\mathcal{L}_{NC}+\gamma*(\mathcal{L}_{BCL} + \mathcal{L}_{SCL}),
\end{equation}
where $\gamma$ balances the contribution of the three terms. The node classification loss $\mathcal{L}_{NC}$ is defined as follows:
\begin{equation}\label{equ:loss_nc}
    \mathcal{L}_{NC}=\sum_{i=1}^{T}\mathcal{L}_{CE}\left(g(\mathcal{G}), \mathcal{Y}\right),
\end{equation}
where $\mathcal{L}_{CE}$ is the cross-entropy loss, $\mathcal{G}$ represents the input graph with few-shot labeled nodes, and $\mathcal{Y}$ represents the labels. We provide the pseudocode and implementation details in Appendix~\ref{sec:pseudocode}.

\section{Experiments}\label{sec:exp}
To evaluate the effectiveness of \name\ for long-tail classification on graphs, we conduct experiments on six benchmark datasets with a large number of classes and data imbalance. Our model exhibits superior performances compared to various state-of-the-art baselines, as detailed in Section~\ref{sec:performAnalysis}.  
Further, through ablation studies in Section~\ref{sec:ablationStudy}, we demonstrate the necessity of each component of \name. We also report the parameter and complexity sensitivity of \name, which shows that \name\ achieves a convincing performance with minimal tuning efforts and is scalable, as given in Section~\ref{sec:paramComplex}.

\subsection{Experiment Setup}\label{sec:expsetup}
\textbf{Datasets: }We evaluate our proposed framework on Cora-Full~\cite{bojchevski2018deep}, BlogCatalog~\cite{tang20009relational}, Email~\cite{yin2017local}, Wiki~\cite{mernyei2020wiki}, Amazon-Clothing~\cite{mcauley2015inferring}, and Amazon-Electronics~\cite{mcauley2015inferring} datasets to perform node classification task. The first four datasets naturally have smaller $\texttt{Ratio}_{LT}$, which indicates higher long-tail; while the last two datasets have larger $\texttt{Ratio}_{LT}$, which requires the manual process to make them harsh long-tail with $\texttt{Ratio}_{LT}\approx 0.25$. Our proposed $\texttt{Ratio}_{LT}$ reflects a similar trend compared to the class-imbalance ratio but offers a more accurate measurement by considering the total number of classes.
The statistics, the original class-imbalance ratio, and the original long-tailedness ratio ($\texttt{Ratio}_{LT}(0.8)$ as defined in Definition~\ref{def:longtailRatio}) of each dataset are summarized in Table~\ref{tab:datasets}. Further descriptions and details about the additional processing of the six datasets are in Appendix~\ref{sec:detaildata}.
\begin{table}[ht]
\centering
\caption{Dataset statistics.}
\setlength{\tabcolsep}{3pt}
\scalebox{0.82}{
\begin{tabular}{ccccccc}
\hline
  Dataset  & \#Nodes & \#Edges & \#Attributes & \#Classes & Imb. & $\texttt{Ratio}_{LT}$ \\ \hline \hline
  Cora-Full & 19,793 & 146,635 & 8,710 & 70 & 0.016 & 1.09\\
  BlogCatalog & 10,312 & 333,983 & 64 & 38 & 0.002 & 0.77\\
  Email & 1,005 & 25,571 & 128 & 42 & 0.009 & 0.79\\
  Wiki & 2,405 & 25,597 & 4,973 & 17 & 0.022 & 1.00\\
  Amazon-Clothing & 24,919 & 91,680 & 9,034 & 77 & 0.097 & 1.23\\
  Amazon-Electronics & 42,318 & 43,556 & 8,669 & 167 & 0.107 & 1.67 \\
  \hline \\
\end{tabular}
}
\label{tab:datasets}
\end{table}

\noindent\textbf{Comparison Baselines:} We compare \name\ with five imbalanced classification methods and six GNN-based long-tail classification methods.
\begin{desclist}[topsep=-1pt, itemsep=-1pt]
    \item\underline{Classical long-tail learning methods}: Origin utilizes a GCN~\citep{kipf2017semisupervised} as the encoder and an MLP as the classifier. Over-sampling~\citep{chawla2003c4} duplicates the nodes of tail classes and creates a new adjacency matrix with the connectivity of the oversampled nodes. Re-weighting~\citep{yuan2012sampling} penalizes the tail nodes to compensate for the dominance of the head nodes. SMOTE~\citep{chawla2002smote} generates synthetic nodes by feature interpolation tail nodes with their nearest and assigns the edges according to their neighbors' edges. Embed-SMOTE~\citep{ando2017deep} performs SMOTE in the embedding space instead of the feature space.
    \item\underline{GNN-based long-tail learning methods}: 
    GraphSMOTE~\citep{zhao2021graphsmote} extends classical SMOTE to graph data by interpolating node embeddings and connecting the generated nodes via a pre-trained edge generator. It has two variants: $\text{GraphSMOTE}_{T}$ and $\text{GraphSMOTE}_{O}$, depending on whether the predicted edges are discrete or continuous. GraphMixup~\citep{wu2021graphmixup} performs semantic feature mixup and contextual edge mixup to capture graph feature and structure and then develops a reinforcement mixup to determine the oversampling ratio for tail classes. ImGAGN~\citep{qu2021imgagn} is an adversarial-based method that uses a generator to simulate minority nodes and a discriminator to discriminate between real and fake nodes. GraphENS~\citep{park2022graphens} is an augmentation method, synthesizing an ego network for nodes in the minority classes with neighbor sampling and saliency-based node mixing. LTE4G~\citep{yun2022lte4g} splits the nodes into four balanced subsets considering class and degree long-tail distributions. Then, it trains an expert for each balanced subset and employs knowledge distillation to obtain the head student and tail student for further classification.
\end{desclist}

\noindent\textbf{Implementation Details:}
We run all the experiments with 10 random seeds and report the evaluation metrics along with standard deviations. Considering the long-tail class-membership distribution, balanced accuracy (bAcc), Macro-F1, and Geometric Means (G-Means) are used as the evaluation metrics, and accuracy (Acc) is used as the traditional metric.
We provide the details of parameter settings in Appendix~\ref{sec:detailpara}.

\subsection{Performance Analysis}\label{sec:performAnalysis}
\begin{table*}[t]
\caption{Comparison of different methods in node classification task.}
\setlength{\tabcolsep}{3pt}
\centering
\begin{tabular}{cc|cccc|cccc}
\hline
\multirow{2}{*}{} & \multirow{2}{*}{Method} & \multicolumn{4}{c|}{Cora-Full} & \multicolumn{4}{c}{BlogCatalog} \\ \cline{3-10}
 &  & bAcc & Macro-F1 & G-Means & Acc & bAcc & Macro-F1 & G-Means & Acc \\ \hline
\hline
\multirow{5}{*}{\rotatebox{90}{Classical}} & Origin & $52.8\pm0.6$ & $54.5\pm0.7$ & $72.5\pm0.4$ & $62.7\pm0.5$ & $7.1\pm0.4$ & $7.3\pm0.4$ & $26.4\pm0.7$ & $15.1\pm1.0$ \\ 
 & Over-sampling & $52.7\pm0.7$ & $54.4\pm0.6$ & $72.4\pm0.5$ & $62.7\pm0.4$ & $7.1\pm0.3$ & $7.2\pm0.3$ & $26.3\pm0.6$ & $15.1\pm1.2$ \\
 & Re-weight & $52.9\pm0.5$ & $54.4\pm0.5$ & $72.5\pm0.3$ & $62.6\pm0.4$ & $7.2\pm0.4$ & $7.3\pm0.5$ & $26.4\pm0.8$ & $15.1\pm0.8$ \\
 & SMOTE & $52.7\pm0.6$ & $54.4\pm0.5$ & $72.4\pm0.4$ & $62.7\pm0.4$ & $7.1\pm0.4$ & $7.2\pm0.5$ & $26.3\pm0.8$ & $15.3\pm1.2$ \\
 & Embed-SMOTE & $52.9\pm0.5$ & $54.4\pm0.5$ & $73.9\pm0.4$ & $62.6\pm0.4$ & $7.1\pm0.5$ & $7.3\pm0.5$ & $26.3\pm0.9$ & $14.8\pm0.8$ \\\hline
\multirow{5}{*}{\rotatebox{90}{GNN-based}} & $\text{GraphSMOTE}_{T}$ & $54.2\pm0.8$ & $54.7\pm0.8$ & $73.4\pm0.6$ & $62.1\pm0.6$ & $8.6\pm0.4$ & $8.5\pm0.5$ & $28.9\pm0.7$ & $18.3\pm1.1$ \\ 
& $\text{GraphSMOTE}_{O}$ & $54.1\pm0.8$ & $54.5\pm0.7$ & $73.3\pm0.5$ & $62.0\pm0.6$ & $8.6\pm0.4$ & $8.5\pm0.4$ & $28.9\pm0.6$ & $18.3\pm0.9$ \\
 & GraphMixup & $53.9\pm1.3$ & $53.9\pm1.3$ & $73.2\pm0.9$ & $61.4\pm1.2$ & $8.0\pm0.6$ & $7.9\pm0.8$ & $27.9\pm1.2$ & $18.8\pm0.8$ \\
 & ImGAGN & $9.3\pm1.1$ & $6.6\pm1.0$ & $30.2\pm1.9$ & $20.9\pm2.1$ & $6.2\pm0.6$ & $4.9\pm0.5$ & $24.6\pm1.3$ & $20.5\pm1.3$ \\
 & GraphENS & $55.0\pm0.6$ & $54.2\pm0.5$ & $73.9\pm0.4$ & $62.1\pm0.4$ & $9.0\pm0.6$ & $8.9\pm0.5$ & $30.8\pm0.9$ & $12.8\pm1.1$ \\
 & LTE4G & $55.8\pm0.6$ & $54.5\pm0.4$ & $74.5\pm0.4$ & $61.6\pm0.4$ & $6.9\pm0.5$ & $6.7\pm0.6$ & $26.0\pm0.9$ & $11.7\pm1.3$ \\ \hline
 & Ours & $\textbf{55.8}\pm0.5$ & $\textbf{57.1}\pm0.5$ & $\textbf{74.5}\pm0.3$ & $\textbf{64.7}\pm0.7$ & $\textbf{9.8}\pm0.2$ & $\textbf{9.6}\pm0.1$ & $\textbf{30.9}\pm0.4$ & $\textbf{23.2}\pm0.6$ \\ \hline
\hline
\multirow{2}{*}{} & \multirow{2}{*}{Method} & \multicolumn{4}{c|}{Email} & \multicolumn{4}{c}{Wiki} \\ \cline{3-10}
 &  & bAcc & Macro-F1 & G-Means & Acc & bAcc & Macro-F1 & G-Means & Acc \\ \hline
\hline
\multirow{5}{*}{\rotatebox{90}{Classical}} & Origin & $48.9\pm4.5$ & $45.2\pm4.3$ & $69.5\pm3.2$ & $\textbf{66.7}\pm2.1$ & $48.2\pm1.5$ & $49.9\pm1.9$ & $68.6\pm1.1$ & $64.2\pm0.9$ \\ 
 & Over-sampling & $48.4\pm4.2$ & $45.4\pm3.7$ & $69.2\pm3.1$ & $66.4\pm2.0$ & $47.3\pm2.1$ & $48.7\pm2.2$ & $67.9\pm1.5$ & $63.6\pm1.4$ \\
 & Re-weight & $47.9\pm4.6$ & $44.2\pm4.2$ & $68.8\pm3.4$ & $66.3\pm1.7$ & $48.1\pm2.1$ & $49.7\pm2.5$ & $68.5\pm1.6$ & $64.0\pm1.4$ \\
 & SMOTE & $48.4\pm4.2$ & $45.4\pm3.7$ & $69.2\pm3.1$ & $66.4\pm2.0$ & $47.3\pm2.1$ & $48.7\pm2.2$ & $67.9\pm1.5$ & $63.6\pm1.4$ \\
 & Embed-SMOTE & $47.9\pm4.6$ & $44.2\pm4.2$ & $68.8\pm3.3$ & $66.2\pm1.7$ & $48.1\pm2.1$ & $49.7\pm2.5$ & $68.5\pm1.6$ & $63.9\pm1.4$ \\ \hline
 \multirow{5}{*}{\rotatebox{90}{GNN-based}} & $\text{GraphSMOTE}_{T}$ & $43.4\pm2.9$ & $39.1\pm2.8$ & $65.5\pm2.2$ & $60.4\pm1.5$ & $50.3\pm1.7$ & $51.8\pm2.2$ & $70.1\pm1.2$ & $65.8\pm0.9$ \\ 
 & $\text{GraphSMOTE}_{O}$ & $42.3\pm3.1$ & $38.3\pm2.9$ & $64.7\pm2.4$ & $60.1\pm2.3$ & $49.6\pm2.3$ & $51.1\pm2.7$ & $69.6\pm1.7$ & $65.5\pm1.2$ \\
 & GraphMixup & $43.2\pm2.3$ & $38.1\pm2.3$ & $65.4\pm1.7$ & $60.1\pm1.7$ & $50.3\pm2.9$ & $51.2\pm2.9$ & $70.0\pm2.1$ & $65.1\pm1.3$ \\
 & ImGAGN & $27.6\pm3.4$ & $26.8\pm2.9$ & $52.0\pm3.2$ & $46.5\pm3.5$ & $41.2\pm5.7$ & $42.3\pm6.4$ & $63.2\pm4.9$ & $65.5\pm5.8$ \\
 & GraphENS & $50.5\pm3.1$ & $43.7\pm3.3$ & $\textbf{71.1}\pm2.2$ & $62.0\pm2.7$ & $50.8\pm3.3$ & $50.1\pm3.4$ & $70.3\pm2.4$ & $61.7\pm4.4$ \\
 & LTE4G & $46.4\pm2.5$ & $39.3\pm2.4$ & $67.8\pm1.8$ & $57.8\pm3.1$ & $51.0\pm2.9$ & $49.7\pm1.9$ & $70.5\pm2.1$ & $60.4\pm2.1$ \\ \hline
 & Ours & $\textbf{50.5}\pm3.0$ & $\textbf{46.6}\pm3.0$ & $70.7\pm2.1$ & $65.4\pm1.7$ & $\textbf{52.8}\pm2.0$ & $\textbf{54.1}\pm2.3$ & $\textbf{71.9}\pm1.4$ & $\textbf{67.2}\pm1.1$ \\ \hline
\end{tabular}
\label{tab:result}
\end{table*}

\begin{table*}[t]
\caption{Comparison of different methods in node classification task on semi-synthetic long-tail datasets with long-tailedness ratio $\texttt{Ratio}_{LT}(0.8) \approx 0.25$.}
\setlength{\tabcolsep}{3pt}
\centering
\begin{tabular}{cc|cccc|cccc}
\hline
\multirow{2}{*}{} & \multirow{2}{*}{Method} & \multicolumn{4}{c|}{Amazon-Clothing} & \multicolumn{4}{c}{Amazon-Electronics} \\ \cline{3-10}
 &  & bAcc & Macro-F1 & G-Means & Acc & bAcc & Macro-F1 & G-Means & Acc \\ \hline
\hline
\multirow{5}{*}{\rotatebox{90}{Classical}} & Origin & $9.9\pm0.2$ & $9.5\pm0.2$ & $31.3\pm0.3$ & $9.9\pm0.2$ & $16.9\pm0.2$ & $15.2\pm0.2$ & $41.0\pm0.3$ & $16.9\pm0.2$ \\ 
 & Over-sampling & $9.9\pm0.2$ & $9.5\pm0.2$ & $31.3\pm0.3$ & $9.9\pm0.2$ & $16.8\pm0.1$ & $15.1\pm0.1$ & $40.9\pm0.2$ & $16.8\pm0.1$ \\
 & Re-weight & $10.0\pm0.2$ & $9.6\pm0.2$ & $31.4\pm0.3$ & $10.0\pm0.2$ & $17.0\pm0.2$ & $15.2\pm0.2$ & $41.1\pm0.3$ & $17.0\pm0.2$ \\
 & SMOTE & $10.0\pm0.1$ & $9.5\pm0.2$ & $31.4\pm0.2$ & $10.0\pm0.1$ & $16.9\pm0.2$ & $15.1\pm0.2$ & $41.0\pm0.3$ & $16.9\pm0.2$ \\
 & Embed-SMOTE & $9.9\pm0.2$ & $9.5\pm0.2$ & $31.3\pm0.3$ & $9.9\pm0.2$ & $17.0\pm0.2$ & $15.2\pm0.2$ & $41.1\pm0.3$ & $17.0\pm0.2$ \\\hline
\multirow{5}{*}{\rotatebox{90}{GNN-based}} & $\text{GraphSMOTE}_{T}$ & $11.7\pm0.2$ & $10.4\pm0.3$ & $34.0\pm0.3$ & $11.7\pm0.2$ & $18.2\pm0.2$ & $15.6\pm0.2$ & $42.5\pm0.2$ & $18.2\pm0.2$ \\ 
 & $\text{GraphSMOTE}_{O}$ & $11.7\pm0.2$ & $10.4\pm0.3$ & $34.0\pm0.3$ & $11.7\pm0.2$ & $18.2\pm0.2$ & $15.5\pm0.2$ & $42.5\pm0.2$ & $18.2\pm0.2$ \\
 & GraphMixup & $10.9\pm0.5$ & $9.3\pm0.7$ & $32.8\pm0.7$ & $10.9\pm0.5$ & $18.1\pm0.4$ & $15.5\pm0.5$ & $42.5\pm0.5$ & $18.1\pm0.4$ \\
 & ImGAGN & $12.9\pm0.2$ & $9.2\pm0.1$ & $35.7\pm0.2$ & $12.9\pm0.2$ & $13.7\pm0.2$ & $11.0\pm0.0$ & $36.9\pm0.2$ & $13.7\pm0.2$ \\
 & GraphENS & $11.6\pm2.7$ & $10.9\pm2.7$ & $33.6\pm4.3$ & $11.6\pm2.7$ & $19.2\pm3.8$ & $17.2\pm3.6$ & $43.5\pm4.4$ & $19.2\pm3.8$ \\
 & LTE4G & $15.5\pm0.3$ & $16.0\pm0.5$ & $39.1\pm0.3$ & $15.5\pm0.3$ & $20.9\pm0.3$ & $19.9\pm0.3$ & $45.7\pm0.3$ & $20.9\pm0.3$ \\ \hline
 & Ours & $\textbf{17.1}\pm0.5$ & $\textbf{16.8}\pm0.6$ & $\textbf{41.1}\pm0.6$ & $\textbf{17.1}\pm0.5$ & $\textbf{23.6}\pm0.9$ & $\textbf{21.0}\pm1.3$ & $\textbf{48.5}\pm1.0$ & $\textbf{23.6}\pm0.9$ \\ \hline
\end{tabular}
\label{tab:resultSyn}
\end{table*}

\textbf{Overall Evaluation.} We compare \name\ with eleven methods on six real-world graphs, and the performance of node classification is reported in Table~\ref{tab:result} and Table~\ref{tab:resultSyn}.  In general, we have the following observations:
(1) \name\ consistently performs well on all datasets under various long-tail settings and especially outperforms other baselines on harsh long-tail settings (\eg., $\texttt{Ratio}_{LT}(0.8) \approx 0.25$), which demonstrates the effectiveness and generalizability of our model. 
More precisely, taking the Amazon-Electronics dataset (which has 167 classes and follows the Pareto distribution with "80-20 Rule") as an example, the improvement of our model on bAcc (Acc) is 12.9\% compared to the second best model (LTE4G). It implies that \name\ can not only solve the highly skewed data but also capture a massive number of classes.
(2) Classical long-tail learning methods have the worst performance because they ignore graph structure information and only conduct oversampling or reweighting in the feature space. \name\ improves bAcc up to 36.1\% on the natural dataset (BlogCatalog) and 71.0\% on the manually processed dataset (Amazon-Clothing) compared to the classical long-tail learning methods.
(3) GNN-based long-tail learning methods achieve the second-best performance (excluding the Email dataset), which implies that it is beneficial to capture or transfer knowledge on the graph topology, but these models ignore the massive number of classes. In particular, since ImGAGN only considers the high-skewed distribution, as the number of classes increases (from Wiki to Cora-Full), the model becomes less effective. Our model outperforms these GNN-based methods on almost all the natural datasets and metrics (excluding Email), such as up to 12.9\% improvement on the manually processed dataset (Amazon-Electronics).

\noindent\textbf{Performance on Each Class.} To observe the performance of our model for the long-tail classification, we plot the model performance (bAcc) on each class in Figure~\ref{fig:resultPerClass} and for groups of ten classes in Figure~\ref{fig:result10Class} in Appendix~\ref{sec:10classes}. We find that \name\ outperforms the original GCN method (which fails to consider the long-tail class-membership distribution), especially on the tail classes.

\subsection{Ablation Study}\label{sec:ablationStudy} 
Table~\ref{tab:ablation} presents the node classification performance on Cora-Full when considering (a) complete \name\; (b) hierarchical task grouping and node classification loss; and (c) only node classification loss. From the results, we have several interesting observations:
(1) Long-tail balanced contrastive learning module (M2) leads to an increase in bAcc by 1.9\%, which shows its strength in improving long-tail classification by ensuring accurate node embeddings ((a) $>$ (b)). 
(2) Hierarchical task grouping (M1) helps the model better share information across tasks, which achieves impressive improvement on Cora-Full by up to 3.2\% ((b) $>$ (c)).
Overall, the ablation study firmly attests both modules are essential in successful long-tail classification on graphs.
\begin{table}[ht]
    \centering
    \caption{Ablation study on each component of \name.}
    \scalebox{0.92}{
    \begin{tabular}{ccc|cccc}
    \hline
    \multicolumn{3}{c|}{Components} & \multicolumn{4}{c}{Cora-Full} \\
    M1 & M2 & $\mathcal{L}_{CE}$ & bAcc & Macro-F1 & G-Means & Acc \\ \hline
    \hline
    \cmark & \cmark & \cmark & $\textbf{55.8}\pm0.5$ & $\textbf{57.1}\pm0.5$ & $\textbf{74.5}\pm0.3$ & $\textbf{64.7}\pm0.7$ \\
    \cmark & & \cmark & $54.5\pm0.5$ & $56.2\pm0.4$ & $73.6\pm0.3$ & $64.5\pm0.4$ \\
     &  & \cmark & $52.8\pm0.6$ & $54.5\pm0.7$ & $72.5\pm0.4$ & $62.7\pm0.5$ \\
    \hline
    \end{tabular}
    }
    \label{tab:ablation}
\end{table}

\subsection{Parameter and Complexity Analysis}\label{sec:paramComplex}
\textbf{Hyperparameter Analysis.} We configure the number of tasks in the second layer to align with the number of classes in Section~\ref{sec:framework}. To investigate the potential effects of overclustering~\cite{ji2019invariant,kim2022contrastive} where the number of clusters is larger than the number of classes, we conduct experiments by adjusting the number of tasks in the second layer. Table~\ref{tab:overcluster} illustrates the impact of varying the number of tasks on model performance. The experimental results reveal that our model achieves great performance within a certain reasonable range of hyperparameters. However, there is a slight performance degradation when the number of hypertasks is small.
\begin{table}[h]
    \centering
    \caption{Hyperparameter analysis on Cora-Full with respect to the number of tasks in the second layer.}
    \scalebox{1.0}{
    \begin{tabular}{c|cccc}
    \hline
     & \multicolumn{4}{c}{Cora-Full} \\
     & bAcc & Macro-F1 & G-Means & Acc \\ \hline
    \hline
     $[198,70]$ & $55.5$ & $56.7$ & $74.2$ & $64.6$ \\
     $[70,35]$ & $55.8$ & $57.1$ & $74.5$ & $64.7$ \\
     $[2,1]$ & $54.9$ & $56.8$ & $73.9$ & $65.5$ \\
    \hline
    \end{tabular}
    }
    \label{tab:overcluster}
\end{table}

In addition, we study the following hyperparameters: (1) the weight $\gamma$ to balance the contribution of three losses; (2) the temperature $\tau$ of balanced contrastive loss in M2; (3) the activation function in GCN; (4) the number of hidden dimensions; and (5) the dropout rate. The sensitivity analysis results are given in Figure~\ref{fig:hyperparameter} and Figure~\ref{fig:hyperappendix} in Appendix~\ref{sec:parameter}.
Overall, we find \name\ is reliable and not sensitive to the hyperparameters under a wide range. 

\noindent\textbf{Complexity Analysis.} We report the running time and memory usage of \name, GCN, and LTE4G (a efficient state-of-the-art method). 
For better visualize the performance, we run the experiment on an increasing graph size, \ie, from 100 to 100,000 nodes. From Figure~\ref{fig:timecomplexity} in Appendix~\ref{sec:parameter}, we can see the running time of our model is almost the same or superior to LTE4G.
The best space complexity of our method can reach $O(nd+d^2+|\mathcal{E}|)$, which is linear in the number of nodes and the number of edges. The memory usage in several synthetic datasets is given in Figure~\ref{fig:spacecomplexity} in Appendix~\ref{sec:parameter}, illustrating the scalability of our method.

\section{Related Work}\label{sec:relatedWork}
\textbf{Long-tail Problems.} Long-tail data distributions are common in real-world applications~\cite{zhang2021deep}.
Several methods are proposed to solve the long-tail problem, such as data augmentation methods~\cite{chawla2003c4, liu2008exploratory} and cost-sensitive methods~\cite{elkan2001foundations, zhou2005training, yuan2012sampling}. 
However, the vast majority of previous efforts focus on independent and identically distributed (i.i.d.) data, which cannot be directly applied to graph data.
Recently, several related works for long-tail classification on graphs~\cite{qu2021imgagn, zhang23when, park2022graphens, yun2022lte4g, wu2021graphmixup} have attracted attention.
Despite this, the long-tail approaches often lack a theoretical basis. 
The most relevant work lies in imbalanced classification. \citet{cao2019learning} and~\citet{kini2021label} present model-related bounds on the error and the SVM margins, while~\citet{yang2020rethinking} provide the error bound of a linear classifier on data distribution and dimension.
In addition, previous long-tail work is performed under the class imbalance settings where the number of classes can be small, and the number of minority nodes may not be small; but for long-tail learning, the number of classes is large, and the tail nodes are scarce. In this paper, we provide a theoretical analysis of the long-tail problem and conduct experiments on long-tail datasets.

\noindent\textbf{Graph Neural Networks.} Graph neural networks emerge as state-of-the-art methods for graph representation learning, which capture the structure of graphs.
Recently, several attempts have been focused on extending pooling operations to graphs. In order to achieve an overview of the graph structure, hierarchical pooling~\cite{Ma2019graph, ranjan2020asap, lee2019self, ying2018hierarchical, gao2019graph} techniques attempt to gradually group nodes into clusters and coarsen the graph recursively. 
\citet{gao2019graph} propose an encoder-decoder architecture based on gPool and gUnpool layers. 
However, these approaches are generally designed to enhance the representation of the whole graph. In this paper, we aim to explore node classification with the long-tail class-membership distribution via hierarchical pooling methods.

\section{Conclusion}\label{sec:conclusion}
In this paper, we investigate long-tail classification on graphs, which intends to improve the performance on both head and tail classes. By formulating this problem in the fashion of multi-task learning, we propose the generalization bound dominated by the range of losses across all tasks and the task complexity. Building upon the theoretical findings, we also present \name. It is a generic framework with two major modules: M1. Hierarchical task grouping to control the complexity of the task space and address C2 (Label scarcity) and C3 (Task complexity); and M2. Long-tail balanced contrastive learning to control the range of losses across all tasks and solve C1 (High-skewed data distribution) and C2 (Label scarcity). Extensive experiments on six real-world datasets, where \name\ consistently outperforms state-of-art baselines, demonstrate the efficacy of our model for capturing long-tail classes on graphs. 

{\bf Reproducibility:} Our code and data are released at
\url{https://anonymous.4open.science/r/HierTail-B961/}.

\section*{Acknowledgements}
We thank the anonymous reviewers for their constructive comments. This work is supported by 
4-VA, Cisco, Commonwealth Cyber Initiative, DARPA-PA-23-04-01, DHS CINA, Deloitte \& Touche LLP, the National Science Foundation under Award No. IIS-2339989, and Virginia Tech. The views and conclusions are those of the authors and should not be interpreted as representing the official policies of the funding agencies or the government.

\bibliographystyle{ACM-Reference-Format}
\bibliography{reference}

\newpage
\appendix

\section{Symbols and notations}\label{sec:notation}
Here we give the main symbols and notations in this paper.
\begin{table} [htbp]
\caption{Symbols and notations.}
\centering
\begin{tabular}{|l|l|}
\hline Symbol&Description\\
\hline
\hline 
$\mathcal{G}$&input graph.\\
$\mathcal{V}$&the set of nodes in $\mathcal{G}$.\\
$\mathcal{E}$&the set of edges in $\mathcal{G}$.\\
$\mathbf{X}$&the node feature matrix of $\mathcal{G}$.\\
$\mathbf{Z}$&the node embeddings in $\mathcal{G}$.\\
$\mathbf{A}$&the adjacency matrix in $\mathcal{G}$.\\
$\mathcal{Y}$&the set of labels in $\mathcal{G}$.\\
$n$&the number of nodes $|\mathcal{V}|$.\\
$T$&the number of categories of nodes $\mathcal{V}$.\\
\hline
$\texttt{Ratio}_{LT}$&the long-tailedness ratio.\\
\hline
\end{tabular}
\label{TB:Notations}
\end{table}

\section{Details of $\texttt{Ratio}_{LT}(p)$}\label{sec:ratio}
To better characterize class-membership skewness and number of classes, we introduce a novel quantile-based metric named long-tailedness ratio for the long-tail datasets.
\begin{equation*}
    \texttt{Ratio}_{LT}(p)=\frac{Q(p)}{T-Q(p)},
\end{equation*}
where $Q(p)=min\{y: Pr(\mathcal{Y}\leq y)=p, 1\leq y\leq T\}$ is the quantile function of order $p\in (0,1)$ for variable $\mathcal{Y}$, $T$ is the number of categories. The numerator represents the number of categories to which $p$ percent instances belong, and the denominator represents the number of categories to which the else $(1-p)$ percent instances belong in $\mathcal{D}$.

The hyperparameter $p$ allows end users to control the number of classes in the head of the long-tail distribution. If there is no specific definition of the head class in certain domains, we suggest simply following the Pareto principle ($p=0.8$). Using the same $p$ value for two long-tail datasets allows us to compare the complexity. Otherwise, if the $\texttt{Ratio}_{LT}(p)$ of two datasets are measured based on different $p$ values, they are not comparable. If there is a specific definition of the head class in certain domains, we can directly calculate the number of head classes and thus infer the $p$ value.

In addition, in light of class-imbalance ratio and long-tailedness ratio, we gain a better understanding of the datasets and methods to use.
(1) High class-imbalance ratio and low $\texttt{Ratio}_{LT}$ imply high-skewed data distribution, and we may encounter a large number of categories. In such situations, a long-tail method that is designed for data imbalance and an extreme number of classes may be necessary to achieve optimal results.
(2) High class-imbalance ratio and high $\texttt{Ratio}_{LT}$ suggest that the task complexity is low with a relatively small number of categories and the dataset may be imbalanced. Therefore, imbalanced classification approaches such as re-sampling or re-weighting may be effective. 
(3) Low class-imbalance ratio and low $\texttt{Ratio}_{LT}$ imply high task complexity but relatively balanced samples. In such cases, extreme classification methods would be preferred.
(4) Low class-imbalance ratio and high $\texttt{Ratio}_{LT}$ suggest that the dataset may not follow a long-tail distribution, and ordinary machine learning methods may achieve great performance.

\section{Details of Theoretical Analysis}\label{sec:proof}
We obtain the range-based generalization error bound for long-tail categories in the following steps: 
(S1) giving the loss-related generalization error bound based on the Gaussian complexity-based bound in Lemma~\ref{lemma:originBound}; 
(S2) deriving the generalization error bound (Theorem~\ref{thm:boundThm}) related to representation extraction $h$ and the range of task-specific predictors $f_1, \ldots, f_T$ based on the loss-related error bound in S1, the property of Gaussian complexity in Lemma~\ref{lemma:gaussianProp}, and the chain rule of Gaussian complexity in Lemma~\ref{lemma:rangeComplexity}.

First, we have the following assumptions from the previous work~\cite{Maurer16Benefit}.
\begin{assumption}[$R$-Lipschitz Function]
\label{asmp:R-Lip}
Assume each function $f \in \mathcal{F}$ is $R$-Lipschitz in $\ell_{2}$ norm, \ie, $\forall \mathbf{x}, \mathbf{x}^{\prime} \in \mathcal{X}$,
\begin{equation*}
\left|f(\mathbf{x})-f\left(\mathbf{x}^{\prime}\right)\right| \leq R\left\|\mathbf{x}-\mathbf{x}^{\prime}\right\|_{2}.
\end{equation*}
\end{assumption}

\begin{assumption}[$\rho$-Lipschitz Loss]
\label{asmp:rho-Lip}
Assume the loss function $l(\cdot, \cdot)$ is $\rho$-Lipschitz if $\exists~\rho>0$ such that $\forall \mathbf{x} \in \mathcal{X}$, $y, y^{\prime} \in \mathcal{Y}$ and $f, f^{\prime} \in \mathcal{H}$, the following inequalities hold:
\begin{equation*}
\begin{aligned}
    \left|l\left(f^{\prime}(\mathbf{x}), y\right)-l(f(\mathbf{x}), y)\right| &\leq \rho\left|f^{\prime}(\mathbf{x})-f(\mathbf{x})\right|, \\
    \left|l\left(f(\mathbf{x}), y^{\prime}\right)-l(f(\mathbf{x}), y)\right| &\leq \rho\left|y^{\prime}-y\right|.
\end{aligned}
\end{equation*}
\end{assumption}

Based on \citet{Maurer16Benefit}, we can derive the Gaussian complexity-based bound on the training set $\mathbf{X}$ as follows (S1).
\begin{restatable}[Gaussian Complexity-Based Bound]{lemma}{originBound}
\label{lemma:originBound}
Let $\mathcal{F}$ be a class of functions $f: \mathbf{X} \rightarrow[0,1]^{T}$, and $\mathbf{x}^t_{i}$ represents $i^\text{th}$ instances belonging to class $t$. Then, with probability greater than $1-\delta$ and for all $f \in \mathcal{F}$, we have the following bound

\begin{equation}
\begin{aligned}
    &\frac{1}{T} \sum_{t}\left(\mathbb{E}_{\mathbf{X} \sim \mu_{t}}\left[f_{t}(\mathbf{X})\right]-\sum_{i}\frac{1}{n_t} f_{t}\left(\mathbf{x}^t_{i}\right)\right) \\
    \leq &\sum_{t} \left(\frac{\sqrt{2 \pi} G(\mathbf{Z})}{n_t T}+\sqrt{\frac{9 \ln (2 / \delta)}{2 n_t T^2}}\right),
\end{aligned}
\end{equation}
where $\mu_{1}, \ldots, \mu_{T}$ are probability measures, $\mathbf{Z} \subset \mathbb{R}^{n}$ is the random set obtained by $\mathbf{Z}=\left\{\left(f_{t}\left(\mathbf{x}^t_{i}\right)\right): f_t \in \mathcal{F}\right\}$, and $G$ is Gaussian complexity.
\end{restatable}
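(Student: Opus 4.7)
The plan is to follow the blueprint of the Maurer multi-task Gaussian-complexity bound, adapted to the heterogeneous sample sizes $n_t$ here. Define the uniform deviation
\[
\Phi(S) \;=\; \sup_{f \in \mathcal{F}} \frac{1}{T}\sum_{t=1}^{T}\!\left(\mathbb{E}_{\mathbf{X}\sim\mu_t}[f_t(\mathbf{X})] - \frac{1}{n_t}\sum_{i=1}^{n_t} f_t(\mathbf{x}^t_i)\right),
\]
where $S$ denotes the whole training sample $(\mathbf{x}^t_i)_{t,i}$. The argument decomposes into three standard ingredients: (S1) a concentration inequality that controls $\Phi(S)$ in terms of $\mathbb{E}[\Phi(S)]$; (S2) a symmetrization step that replaces $\mathbb{E}[\Phi(S)]$ by a Rademacher average over $\mathcal{F}$; (S3) a comparison inequality that passes from Rademacher to Gaussian complexity, producing the $\sqrt{2\pi}\,G(\mathcal{Y})$ factor on the right-hand side.

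For (S1), since each $f_t$ takes values in $[0,1]$, perturbing a single node $\mathbf{x}^t_i$ changes $\Phi$ by at most $\frac{1}{T n_t}$. McDiarmid's bounded-differences inequality then yields, with probability at least $1-\delta/2$,
\[
\Phi(S) \;\leq\; \mathbb{E}[\Phi(S)] + \sqrt{\tfrac{\ln(2/\delta)}{2}\sum_t \tfrac{1}{T^{2} n_t}},
\]
after which I would upper-bound $\sqrt{\sum_t 1/(T^{2}n_t)}$ by the corresponding sum of square roots to recover a per-task term of the shape $\sqrt{\ln(2/\delta)/(2n_t)}$. For (S2), I would introduce an independent ghost sample $S'$ and Rademacher variables $\sigma^t_i$, giving
\[
\mathbb{E}[\Phi(S)] \;\leq\; \frac{2}{T}\,\mathbb{E}\!\left[\sup_{f\in\mathcal{F}} \sum_{t} \frac{1}{n_t}\sum_{i=1}^{n_t} \sigma^t_i\, f_t(\mathbf{x}^t_i)\right].
\]

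For (S3), I would invoke the classical comparison $\mathbb{E}_{\sigma}[\sup_{\phi}\sum_i \sigma_i\phi_i] \leq \sqrt{\pi/2}\,\mathbb{E}_{g}[\sup_{\phi}\sum_i g_i\phi_i]$ with $g_i$ standard Gaussian, applied to the vector $\phi = (f_t(\mathbf{x}^t_i))_{t,i} \in \mathcal{Y}$. Pulling the $1/n_t$ normalization inside the supremum converts the Rademacher average into $G(\mathcal{Y})/n_t$ terms and, combined with the symmetrization factor of $2$, produces the stated $\sqrt{2\pi}\,G(\mathcal{Y})/n_t$ contribution. Finally, I would assemble the two high-probability events (the one from S1 and a second McDiarmid-type tail that handles the concentration of the empirical Gaussian complexity around its expectation) via a union bound, each at level $\delta/2$.

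The main obstacle, as is typical for such theorems, is not any deep new idea but careful constant tracking: the factor $9$ inside the square root in the statement has to materialize from combining (a) the McDiarmid tail in S1, (b) the symmetrization factor $2$ in S2, and (c) a second application of McDiarmid or a Jensen-type inequality used to swap the expectation over the Gaussian complexity with its realization on $S$. A secondary subtlety is to preserve the precise summation structure $\sum_t (\,\cdot\,/n_t)$ across the chain, which, with unequal $n_t$, requires either a task-wise split with a union bound or a careful use of the per-task variance proxy $1/n_t$ directly inside each concentration step.
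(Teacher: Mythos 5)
Your plan is sound, but it takes a noticeably more self-contained route than the paper. The paper's entire proof of Lemma~\ref{lemma:originBound} is two sentences: invoke the single-task ($T=1$) Gaussian-complexity bound of Maurer et al.\ as a black box, obtaining $\mathbb{E}_{\mathbf{X}\sim\mu_t}[f_t(\mathbf{X})]-\frac{1}{n_t}\sum_i f_t(\mathbf{x}^t_i)\leq \frac{\sqrt{2\pi}\,G(\mathcal{Y})}{n_t}+\sqrt{\frac{9\ln(2/\delta)}{2n_t}}$ for each $t$, and then sum over $t$ (the $1/T$ on the left only loosens the result further). You instead unpack what that citation contains: a single McDiarmid step on the joint deviation $\Phi(S)$, symmetrization, and the Rademacher-to-Gaussian comparison giving $2\sqrt{\pi/2}=\sqrt{2\pi}$. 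Your joint treatment is actually cleaner on one point the paper glosses over: summing $T$ separate high-probability statements, each holding with probability $1-\delta$, requires a union bound and would degrade $\delta$ to $\delta/T$ (hence $\ln(2T/\delta)$), whereas your single application of McDiarmid to the aggregated statistic, with bounded differences $1/(Tn_t)$ and the crude bound $\sqrt{\sum_t a_t}\leq\sum_t\sqrt{a_t}$, delivers the per-task tail terms at level $\delta$ directly. Two steps of your sketch still need to be made precise if you execute it: (i) converting the joint Gaussian average of the $1/n_t$-rescaled set into $\sum_t G(\mathcal{Y})/n_t$ requires splitting the supremum task-by-task and bounding the Gaussian average of each coordinate projection of $\mathcal{Y}$ by $G(\mathcal{Y})$ (a Jensen argument); and (ii) the constant $9$ arises, as in Maurer's original argument, from the second concentration step needed because $\mathcal{Y}$ (hence $G(\mathcal{Y})$) is sample-dependent, which adds $2\sqrt{\ln(2/\delta)/(2n_t)}$ to the $\sqrt{\ln(2/\delta)/(2n_t)}$ from the first step, giving $3=\sqrt{9}$. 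Neither gap is conceptual, and in fairness the paper's own proof resolves both only by citation.
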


\begin{proof}
    Following Theorem 8 in~\citep{Maurer16Benefit}, we have $\mathbb{E}_{\mathbf{X} \sim \mu_{t}}[f_{t}(\mathbf{X})]-\sum_{i}\frac{1}{n_t} f_{t}(\mathbf{x}^t_{i}) \leq \frac{\sqrt{2 \pi} G(\mathbf{Z})}{n_t}+\sqrt{\frac{9 \ln (2 / \delta)}{2 n_t}}$. Next, we perform the summation operation for $t$.
\end{proof}
Lemma~\ref{lemma:originBound} yields that the task-averaged estimation error is bounded by the Gaussian complexity in multi-task learning. 
Next, we will give the key property of the Gaussian averages of a Lipschitz image in Lemma~\ref{lemma:gaussianProp}, and will present the chain rule of Gaussian complexity in Lemma~\ref{lemma:rangeComplexity}.
\begin{restatable}[Property of Gaussian Complexity, Corollary 11 in~\citep{Maurer16Benefit}]{lemma}{gaussianProp}
\label{lemma:gaussianProp}
Suppose $\mathbf{Z} \subseteq \mathbb{R}^{n}$ and $\phi: \mathbf{Z} \rightarrow \mathbb{R}^{m}$ is (Euclidean) Lipschitz continuous with Lipschitz constant $R$, we have
\begin{equation}
    G(\phi(\mathbf{Z})) \leq R G(\mathbf{Z}).
\end{equation}
\end{restatable}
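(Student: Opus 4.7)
The plan is to establish the inequality via a classical Gaussian comparison argument, specifically the Sudakov--Fernique inequality. The key observation is that the Lipschitz hypothesis on $\phi$ translates directly into a covariance-increment comparison between two centered Gaussian processes indexed by $\mathcal{Y}$, and Sudakov--Fernique then transfers this comparison from increments to expected suprema, which is exactly what the Gaussian complexity measures.

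First, I would rewrite both sides as expected suprema of canonical Gaussian processes. Using $G(\mathcal{Y}) = \mathbb{E}_{\gamma}\sup_{y\in\mathcal{Y}} \langle \gamma, y \rangle$ with $\gamma \sim \mathcal{N}(0, I_n)$, and introducing an independent $\tilde{\gamma} \sim \mathcal{N}(0, I_m)$, define the centered Gaussian processes $X_y = \langle \tilde{\gamma}, \phi(y) \rangle$ and $Y_y = L\,\langle \gamma, y \rangle$ for $y \in \mathcal{Y}$. By construction $\mathbb{E}\sup_y X_y = G(\phi(\mathcal{Y}))$ and $\mathbb{E}\sup_y Y_y = L\, G(\mathcal{Y})$, so it suffices to prove $\mathbb{E}\sup_y X_y \leq \mathbb{E}\sup_y Y_y$.

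Second, I would verify the increment-variance comparison required by Sudakov--Fernique. For every $y, y' \in \mathcal{Y}$,
\begin{equation}
\mathbb{E}\bigl[(X_y - X_{y'})^2\bigr] = \|\phi(y) - \phi(y')\|_2^2 \leq L^2 \|y - y'\|_2^2 = \mathbb{E}\bigl[(Y_y - Y_{y'})^2\bigr],
\end{equation}
where the middle step is precisely the hypothesis that $\phi$ is $L$-Lipschitz in the Euclidean norm. Sudakov--Fernique then yields $\mathbb{E}\sup_y X_y \leq \mathbb{E}\sup_y Y_y$, which is the claimed bound $G(\phi(\mathcal{Y})) \leq L\, G(\mathcal{Y})$.

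The main obstacle I anticipate is not the comparison itself but the underlying Sudakov--Fernique inequality: proving it cleanly requires a Gaussian interpolation (or integration-by-parts) argument in the finite-dimensional case, combined with a monotone passage from finite subsets $\mathcal{Y}_k \uparrow \mathcal{Y}$ to handle measurability of the supremum when $\mathcal{Y}$ is uncountable. A secondary bookkeeping point is that some references equip $G(\cdot)$ with a normalizing constant (e.g.\ a $\sqrt{2/\pi}$ factor relating it to Rademacher complexity), but any such multiplicative constant cancels on both sides, leaving the stated inequality unchanged. Since the result is cited from~\cite{Maurer16Benefit}, the cleanest writeup would invoke Sudakov--Fernique as a black box and reduce the proof to verifying the one-line increment comparison above.
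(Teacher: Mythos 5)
Your proof is correct: the increment computation $\mathbb{E}[(X_y - X_{y'})^2] = \|\phi(y)-\phi(y')\|_2^2 \leq L^2\|y-y'\|_2^2 = \mathbb{E}[(Y_y - Y_{y'})^2]$ is exactly the hypothesis Sudakov--Fernique needs, and the conclusion $\mathbb{E}\sup_y X_y \leq \mathbb{E}\sup_y Y_y$ is precisely $G(\phi(\mathcal{Y})) \leq L\,G(\mathcal{Y})$. The paper itself gives no proof of this lemma --- it restates it in the appendix and imports it directly from the cited reference --- and your Gaussian-comparison argument is the standard route by which that reference establishes the contraction property, so there is nothing to reconcile.
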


\begin{restatable}[Chain Rule of Gaussian Complexity]{lemma}{rangeComplexity}
\label{lemma:rangeComplexity}
    Suppose we have $S=\left\{\left(l(f_{t}(h(X^t_{i})), Y^t_{i})\right): f_t \in \mathcal{F} \right. and \left. h \in \mathcal{H}\right\} \subseteq \mathbb{R}^{n}$. $\mathcal{F}$ is a class of functions $f: \mathbf{Z} \rightarrow \mathbb{R}^{m}$, all of which have Lipschitz constant at most $R$, $\mathbf{Z} \subseteq \mathbb{R}^{n}$ has (Euclidean) diameter $D(\mathbf{Z})$. Then, for any $z_{0} \in \mathbf{Z}$,
\begin{equation*}
\begin{aligned}
    G(S) &\leq c_{1} \rho R G(\mathbf{Z})+c_{2} D(\mathbf{Z}) \texttt{Range}(f_1, \ldots, f_T)\\ & +\rho G\left(\mathcal{F}\left(z_{0}\right)\right),
\end{aligned}
\end{equation*}
where $c_{1}$ and $c_{2}$ are universal constants.
\end{restatable}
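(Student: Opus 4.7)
The plan is to mirror the classical chain rule argument of Maurer for Gaussian averages and then specialize the ``spread'' term so that it collapses onto the Range quantity from Definition~\ref{def:range}. First I would fix the reference point $y_{0}\in\mathcal{Y}$ and, for every $f\in\mathcal{F}$ and every $y\in\mathcal{Y}$, write $f(y)=f(y_{0})+(f(y)-f(y_{0}))$. Substituting this decomposition into the Gaussian process indexed by $\mathcal{F}(\mathcal{Y})$ and applying the triangle inequality for Gaussian averages yields three pieces: a term $G(\mathcal{F}(y_{0}))$ that sits at the anchor point, a term controlled by the variation in $y$ with $f$ held fixed, and a cross term controlled by the variation over $f\in\mathcal{F}$.

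Next I would control the fix-$f$ variation in $y$. Since every $f\in\mathcal{F}$ has Lipschitz constant at most $L(\mathcal{F})$, Lemma~\ref{lemma:gaussianProp} bounds the Gaussian complexity of $\{f(y)-f(y_{0}):y\in\mathcal{Y}\}$ by $L(\mathcal{F})\,G(\mathcal{Y})$. Uniformizing over $f\in\mathcal{F}$ via a Slepian-type comparison (absorbing a universal constant $c_{1}$) then produces the first summand $c_{1}L(\mathcal{F})G(\mathcal{Y})$. The anchor term $G(\mathcal{F}(y_{0}))$ is already in the desired form and needs no further argument.

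For the remaining cross term I would exploit the finite-task structure. Because in the multi-task setup $\mathcal{F}=\{f_{1},\ldots,f_{T}\}$ is a finite collection indexed by classes, the Gaussian supremum over $\mathcal{F}$ at a displaced location $y\neq y_{0}$ can be bounded by the Euclidean displacement $\|y-y_{0}\|\le D(\mathcal{Y})$ multiplied by a seminorm measuring how far apart the predictors are. Using that $l(\cdot,\cdot)$ is Lipschitz in its first argument and evaluating this pointwise spread along the training sample, an $\ell_{\infty}$-to-$\ell_{2}$ reduction shows that the spread is dominated, up to an absolute constant $c_{2}$, by $\texttt{Range}(f_{1},\ldots,f_{T})$. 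Adding this to the two earlier contributions gives the claimed inequality.

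The main obstacle is the final identification: the generic oscillation term that naturally appears in Maurer's chain rule is a supremum seminorm on $\mathcal{F}$, whereas Definition~\ref{def:range} measures a difference of \emph{empirical losses} rather than a difference of raw predictions. Bridging the two requires two auxiliary facts that must be justified carefully, namely (i) Lipschitz continuity of the loss in its prediction argument, so that differences in $f_{t}(h(\mathbf{x}))$ can be converted into differences in $l(f_{t}(h(\mathbf{x})),y)$, and (ii) a symmetrization-style step confirming that the empirical max and min of per-task losses, and not their population counterparts, faithfully control the relevant pointwise spread on the training sample. Once these two bridges are in place, the reduction is routine, but each carries a constant that should be tracked explicitly to see that it is absorbed into $c_{2}$.
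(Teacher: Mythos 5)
Your proposal follows essentially the same route as the paper: both rest on Maurer's chain rule for Gaussian averages to obtain the decomposition $G(\mathcal{F}(\mathcal{Y})) \leq c_{1}L(\mathcal{F})G(\mathcal{Y}) + c_{2}D(\mathcal{Y})\,R(\mathcal{F}) + G(\mathcal{F}(y_{0}))$, where $R(\mathcal{F})$ is the Gaussian oscillation seminorm over $\mathcal{F}$, and then bound $R(\mathcal{F})$ by $\texttt{Range}(f_1,\ldots,f_T)$. The bridge you flag as the main obstacle --- converting the oscillation of raw predictions into a difference of empirical per-task losses via Lipschitzness of $l$ --- is precisely the step the paper carries out (in a rather compressed chain of inequalities), so your outline contains no missing idea beyond what the paper itself supplies.
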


\begin{proof}
    By the Lipschitz property of the loss function $l(\cdot, \cdot)$ and the contraction lemma~\ref{lemma:gaussianProp}, we have $G(S) \leq \rho G\left(S^{\prime}\right)$, where 
    $S^{\prime}=\left\{\left(f_{t}(h(X^t_{i}))\right): f_t \in \mathcal{F} \text{and } h \in \mathcal{H}\right\} \subseteq \mathbb{R}^{n}$.
    Let 
    \begin{equation}
        R(\mathcal{F})=\sup _{\mathbf{z}, \mathbf{z}^{\prime} \in \mathbf{Z}, \mathbf{z} \neq \mathbf{z}^{\prime}} \mathbb{E}\sup_{f \in \mathcal{F}} \frac{\left\langle\gamma, f(\mathbf{z})-f(\mathbf{z}^{\prime})\right\rangle}{\left\|\mathbf{z}-\mathbf{z}^{\prime}\right\|}.
    \end{equation}
    where $\gamma$ is a vector of independent standard normal variables. Following Theorem 2 in~\citep{Maurer14chain}, we have 
    \begin{equation}\label{equ:ApplyRange}
    \begin{aligned}
        G\left(S\right) \leq & c_{1} \rho R G(\mathcal{H}(\mathbf{X}))+c_{2} \rho D(\mathcal{H}(\mathbf{X})) R(\mathcal{F}) \\ +&\rho \min _{z \in \mathbf{Z}} G(\mathcal{F}(z)).
    \end{aligned}
    \end{equation}
    where $c_{1}$ and $c_{2}$ are constants. Furthermore, 
    
    \begin{equation}
    \begin{aligned}
        &\rho \sup _{\mathbf{z}, \mathbf{z}^{\prime} \in \mathbf{Z}, \mathbf{z} \neq \mathbf{z}^{\prime}} \mathbb{E}\sup_{f \in \mathcal{F}} \frac{\left\langle\gamma, f(\mathbf{z})-f(\mathbf{z}^{\prime})\right\rangle}{\left\|\mathbf{z}-\mathbf{z}^{\prime}\right\|}\\
        =&\sup _{\mathbf{z}, \mathbf{z}^{\prime} \in \mathbf{Z}, \mathbf{z} \neq \mathbf{z}^{\prime}} \frac{\left\|l(f(\mathbf{z}), y)-l(f(\mathbf{z}^{\prime}),y^{\prime})\right\|}{\left\|f(\mathbf{z})-f(\mathbf{z}^{\prime})\right\|} \mathbb{E}\sup_{f \in \mathcal{F}} \frac{\left\langle\gamma, f(\mathbf{z})-f(\mathbf{z}^{\prime})\right\rangle}{\left\|\mathbf{z}-\mathbf{z}^{\prime}\right\|}\\
        \leq &\sup _{\mathbf{z}, \mathbf{z}^{\prime} \in \mathbf{Z}, \mathbf{z} \neq \mathbf{z}^{\prime}} \mathbb{E}\sup_{f \in \mathcal{F}} \frac{\left\langle\gamma, l(f(\mathbf{z}), y)-l(f(\mathbf{z}^{\prime}),y^{\prime})\right\rangle}{\left\|\mathbf{z}-\mathbf{z}^{\prime}\right\|}\\
        \leq & \sup _{\mathbf{z}, \mathbf{z}^{\prime} \in \mathbf{Z}, \mathbf{z} \neq \mathbf{z}^{\prime}} \mathbb{E}\left[\sup_{f \in \mathcal{F}}\left\langle\gamma, l\left(f(\mathbf{z}),y\right)\right\rangle - \sup_{f \in \mathcal{F}}\left\langle\gamma, l\left(f(\mathbf{z}^{\prime}),y^{\prime}\right)\right\rangle \right]\\
        \leq & \sup _{\mathbf{z}, \mathbf{z}^{\prime} \in \mathbf{Z}, \mathbf{z} \neq \mathbf{z}^{\prime}}\left[\frac{1}{n}\sum l(f(h(\mathbf{X})), y) - \frac{1}{n}\sum l(f(h(\mathbf{X}^{\prime})), y^{\prime}) \right]\\
        \leq & \max_{t}\frac{1}{n_t}\sum_{i=1}^{n_t}l(f_t(h(\mathbf{x}^t_{i})), y^t_{i}) -\min_{t}\frac{1}{n_t}\sum_{i=1}^{n_t}l(f_t(h(\mathbf{x}^t_{i})), y^t_{i}).
    \end{aligned}
    \end{equation}
\end{proof}

Finally, we can move to the second step and then derive the generalization error bound related to $h$ and $f_1, \ldots, f_T$ under the setting of long-tail categories on graphs. With the previous assumptions, the generalization bound is given as in the following Theorem~\ref{thm:boundThm}.

\boundThm*
\begin{proof}
    By Lemma~\ref{lemma:originBound}, we have that 
    \begin{equation}\label{equ:ApplyOrigin}
    \mathcal{E} - \hat{\mathcal{E}} \leq \sum_{t} \left(\frac{\sqrt{2 \pi} G(S)}{n_t T}+\sqrt{\frac{9 \ln (2 / \delta)}{2 n_t T^2}}\right),
    \end{equation}
    where $S=\left\{\left(l(f_{t}(h(X^t_{i})), Y^t_{i})\right): f_t \in \mathcal{F} \right. and \left. h \in \mathcal{H}\right\} \subseteq \mathbb{R}^{n}$.  Next, because we have $f_t(0)=0$ for all $f_t \in \mathcal{F}$, the last term in (\ref{equ:ApplyRange}) vanishes. Substitution in (\ref{equ:ApplyRange}) and using Lemma~\ref{lemma:rangeComplexity}, we have
    \begin{equation}
    \begin{aligned}
        G(S) &\leq c_{1} \rho R G(\mathcal{H}(\mathbf{X}))+c_{2} \sqrt{T} D(\mathcal{H}(\mathbf{X})) \texttt{Range}(f_{1}, \ldots, f_{T}).
    \end{aligned}
    \end{equation}
    Finally, we bound $D(\mathcal{H}(\mathbf{X})) \leq 2 \sup _{h}\|h(\mathbf{X})\|$ and substitution in (\ref{equ:ApplyOrigin}), the proof is completed.
\end{proof}

Theorem~\ref{thm:boundThm} shows that the generalization performance of long-tail categories on graphs can be improved by (1) reducing the loss range across all tasks $\texttt{Range}(f_1, \ldots, f_T)$, as well as (2) controlling the task complexity related to $T$. 

\section{Pseudocode}\label{sec:pseudocode}
The pseudo-code of \name\ is provided in Algorithm~\ref{Alg}. Given an input graph $\mathcal{G}$ with few-shot label information $\mathcal{Y}$, our proposed \name\ framework aims to predict $\hat{\mathcal{Y}}$ of unlabeled nodes in graph $\mathcal{G}$. 
We initialize all the task grouping, the unpooling layers and the classifier in Step~\ref{step:init}. Steps~\ref{step:gPoolStart}-\ref{step:gPoolEnd} correspond to the task grouping process: We generate down-sampling graphs and compute node representations using GCNs. Then Steps~\ref{step:gUnpoolStart}-\ref{step:gUnpoolEnd} correspond to the unpooling process: We restore the original graph resolutions and compute node representations using GCNs. An MLP is followed for computing predictions after skip-connections between the task grouping and unpooling layers in Step~\ref{step:mlp}. Finally, in Step~\ref{step:update}, models are trained by minimizing the objective function. In Steps~\ref{step:return}, we return predicted labels $\hat{\mathcal{Y}}$ in the graph $\mathcal{G}$ based on the trained classifier.

\begin{algorithm}[!t]
\caption{The \name\ Learning Framework.}
\label{Alg}
\begin{algorithmic}[1]
\REQUIRE ~~\\
    an input graph $\mathcal{G} = (\mathcal{V}, \mathcal{E}, \mathbf{X})$ with small node class long-tail ratio $\texttt{Ratio}_{LT}(\alpha)$ and few-shot annotated data $\mathcal{Y}$.
\ENSURE ~~\\
    Accurate predictions $\hat{\mathcal{Y}}$ of unlabeled nodes in the graph $\mathcal{G}$\\
    \STATE\label{step:init} Initialize GCNs for graph embedding layer, task grouping layers, and unpooling layers; the MLP for the node classification task in $\mathcal{G}$.
    \WHILE{not converge}\label{step:start}
        \STATE\label{step:embed} Compute node representations in a low-dimensional space of $\mathcal{G}$ via GCN for graph embedding layer.
        \FOR{layer $l \in \{1, \ldots, L\}$}\label{step:gPoolStart}
            \STATE\label{step:gPool} Generate a down-sampling new graph (Eq.~\eqref{equ:gPool}) and compute node representations for the new graph by $l^\text{th}$ task grouping layer.
        \ENDFOR\label{step:gPoolEnd}
        \FOR{layer $l \in \{1, \ldots, L\}$}\label{step:gUnpoolStart}
            \STATE\label{step:gUnpool} Restore the original graph resolutions (Eq.~\eqref{equ:gUnpool}) and compute node representations for the origin graph by $l^\text{th}$ unpooling layer.
        \ENDFOR\label{step:gUnpoolEnd}
        \STATE\label{step:mlp} Perform skip-connections between the task grouping and unpooling layers, and calculate final node embeddings by feature addition. Employ an MLP layer for final predictions.
        \STATE\label{step:update} Calculate node classification loss $\mathcal{L}_{NC}$ (Eq.~\eqref{equ:loss_nc}) with node embeddings obtained in Step~\ref{step:embed}, calculate balanced contrastive loss $\mathcal{L}_{BCL}$ (Eq.~\eqref{equ:loss_bcl}) with node embeddings obtained in Step~\ref{step:gPool}, and calculate supervised contrastive loss $\mathcal{L}_{SCL}$ (Eq.~\eqref{equ:loss_scl}) with node embeddings obtained in Step~\ref{step:mlp}. Update the hidden parameters of GCNs and MLP by minimizing the loss function in Eq.~\eqref{equ:loss}.
    \ENDWHILE\label{step:end}
    \STATE\label{step:return} return predicted labels $\hat{\mathcal{Y}}$ for unlabeled nodes in the graph $\mathcal{G}$.
\end{algorithmic}
\end{algorithm}

\section{Details of Experiment Setup}
\subsection{Datasets}\label{sec:detaildata}
In this subsection, we give further details and descriptions on the six datasets to supplement Sec.~\ref{sec:expsetup}.
(1) Cora-Full is a citation network dataset. Each node represents a paper with a sparse bag-of-words vector as the node attribute. The edge represents the citation relationships between two corresponding papers, and the node category represents the research topic.
(2) BlogCatalog is a social network dataset with each node representing a blogger and each edge representing the friendship between bloggers. The node attributes are generated from Deepwalk following~\citep{Perozzi2014deepwalk}. 
(3) Email is a network constructed from email exchanges in a research institution, where each node represents a member, and each edge represents the email communication between institution members. 
(4) Wiki is a network dataset of Wikipedia pages, with each node representing a page and each edge denoting the hyperlink between pages. 
(5) Amazon-Clothing is a product network which contains products in "Clothing, Shoes and Jewelry" on Amazon, where each node represents a product, and is labeled with low-level product categories for classification. The node attributes are constructed based on the product's description, and the edges are established based on their substitutable relationship ("also viewed"). 
(6) Amazon-Electronics is another product network constructed from products in "Electronics" with nodes, attributes, and labels constructed in the same way. Differently, the edges are created with the complementary relationship ("bought together") between products.

For additional processing, the first four datasets are randomly sampled according to train/valid/test ratios = 1:1:8 for each category. For the last two datasets, nodes are removed until the category distribution follows a long-tail distribution (here we make the head 20\% categories containing 80\% of the total nodes) with keeping the connections between the remaining nodes. We sort the categories by the number of nodes they contain and then downsample them according to Pareto distribution. 
When eliminating nodes, we remove nodes with low degrees and their corresponding edges. After semi-synthetic processing, the long-tailedness ratio of order 0.8 ($\texttt{Ratio}_{LT}(0.8)$) of train set is approximately equal to 0.25. For valid/test sets, we sample 25/55 nodes from each category. Notably, for Amazon-Clothing and Amazon-Electronics, we keep the same number of nodes for each category as test instances, so the values of bAcc and Acc are the same. To sum up, \name\ is evaluated based on four natural datasets, and two additional datasets with semi-synthetic long-tail settings.

\subsection{Parameter Settings}\label{sec:detailpara}
For a fair comparison, we use vanilla GCN as backbone and set the hidden layer dimensions of all GCNs in baselines and \name\ to 128 for Cora-Full, Amazon-Clothing, Amazon-Electronics and 64 for BlogCatalog, Email, Wiki. We use Adam~\citep{Kingma15Adam} optimizer with learning rate 0.01 and weight deacy $5e-4$ for all models. For the oversampling-based baselines, the number of imbalanced classes is set to be the same as in~\citep{yun2022lte4g}. And the scale of upsampling is set to 1.0 as in~\citep{yun2022lte4g}, that is, the same number of nodes are oversampled for each tail category. For GraphSMOTE, we set the weight of edge reconstruction loss to $1e-6$ as in the original paper~\citep{zhao2021graphsmote}. For GraphMixup, we use the same default hyperparameter values as in the original paper~\citep{wu2021graphmixup} except settings of maximum epoch and Adam. For GraphENS~\citep{park2022graphens} and LTE4G~\citep{yun2022lte4g}, we adopt the best hyperparameter settings reported in the paper. For our model, the weight $\gamma$ of contrastive loss is selected in $\{0.01, 0.1\}$, the temperature $\tau$ of contrastive learning is selected in $\{0.01, 0.1, 1.0\}$. We set the depth of the hierarchical graph neural network to 3; node embeddings are calculated for the first layer, the number of tasks is set to the number of categories for the second layer, and the number of tasks is half the number of categories for the third layer. In addition, the maximum training epoch for all the models is set to $10,000$. If there is no additional setting in the original papers, we set the early stop epoch to $1,000$, \ie., the training stops early if the model performance does not improve in 1000 epochs. All the experiments are conducted on an A100 SXM4 80GB GPU.

\section{Additional Experiment Results}
\subsection{Performance on Each Ten Classes}\label{sec:10classes}
 In Figure~\ref{fig:resultPerClass}, we plot the model performance on each category, showing that our model outperforms the original GCN method. In addition, we have added the following figure to provide more details.
\begin{figure}[h]
  \centering
  \scalebox{1.0}{
  \includegraphics[width=\linewidth]{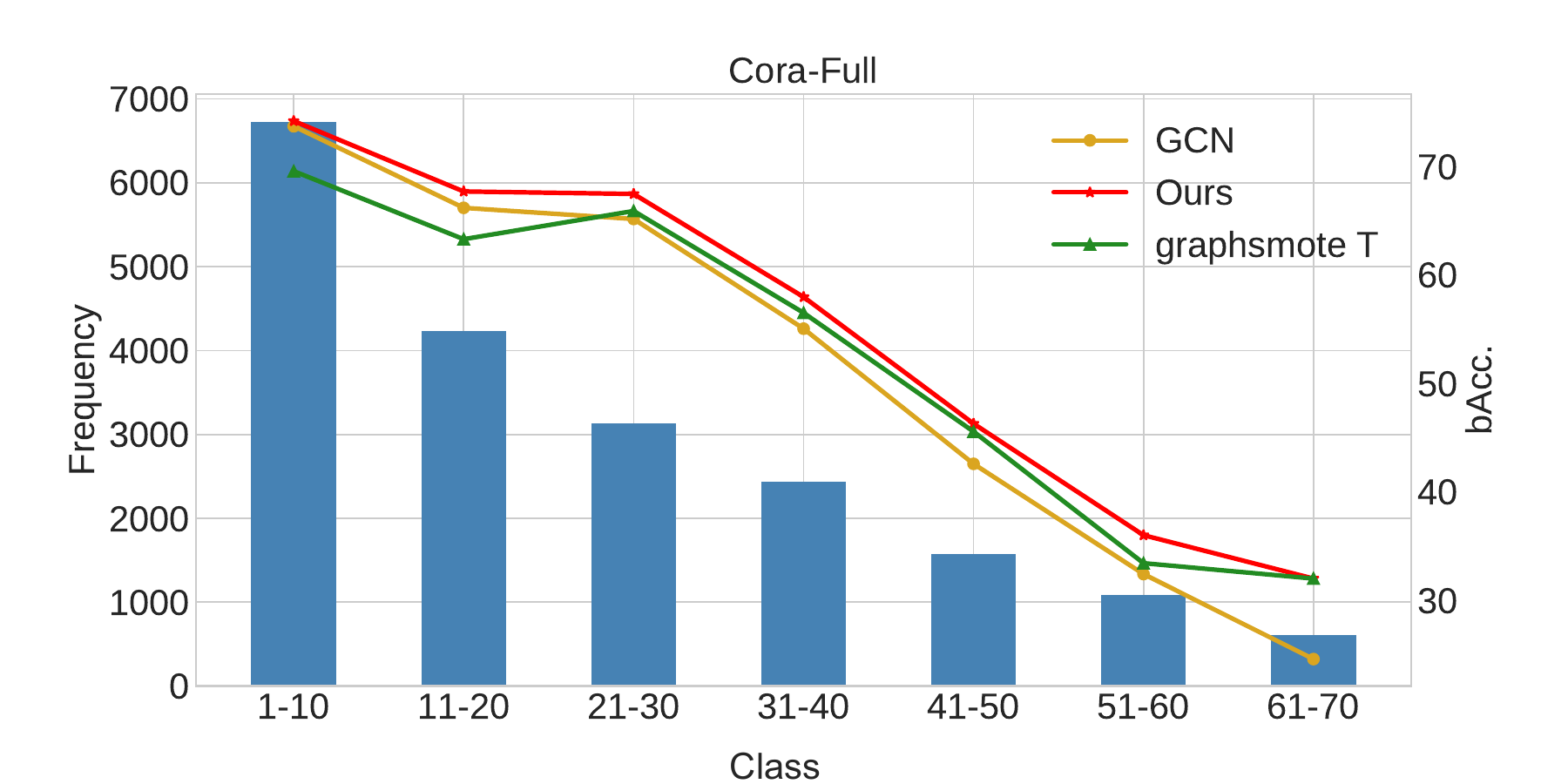}}
  \caption{Performance on groups of ten classes in Cora-Full dataset, where the yellow, red and green curves show bAcc (\%) of GCN, \name\ and GraphSMOTE\_T for node classification.}
  \label{fig:result10Class}
\end{figure}

\subsection{Parameter and Complexity Analysis}\label{sec:parameter}
\textbf{Hyperparameter Analysis:} 
We study the following hyperparameters: (1) the weight $\gamma$ to balance the contribution of three losses; (2) the temperature $\tau$ of balanced contrastive loss in M2; (3) the activation function in GCN; (4) the number of hidden dimensions; and (5) the dropout rate.
First we show the sensitivity analysis with respect to weight $\gamma$ and temperature $\tau$, and the results are shown in Figure~\ref{fig:hyperparameter}. The fluctuation of the bAcc (z-axis) is less than 5\%. The bAcc is slightly lower when both weight $\gamma$ and temperature $\tau$ become larger. 
The analysis results for the remaining hyperparameters are presented in Figure~\ref{fig:hyperappendix}. For analyzing these hyperparameters, all the experiments are conducted with weight $\gamma=0.01$ and temperature $\tau=0.01$.
Overall, we find \name\ is reliable and not sensitive to the hyperparameters under a wide range. 

\begin{figure}[ht]
    \centering
    \includegraphics[width=\linewidth]{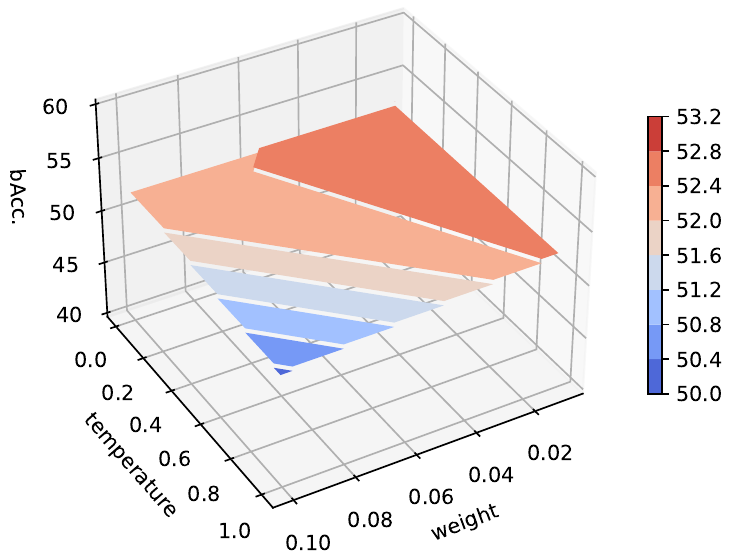}
    \caption{Hyperparameter analysis on Cora-Full with respect to weight $\gamma$ and temperature $\tau$.}
\label{fig:hyperparameter}
\end{figure}

\begin{figure}[ht]
    \centering
    \includegraphics[width=\linewidth]{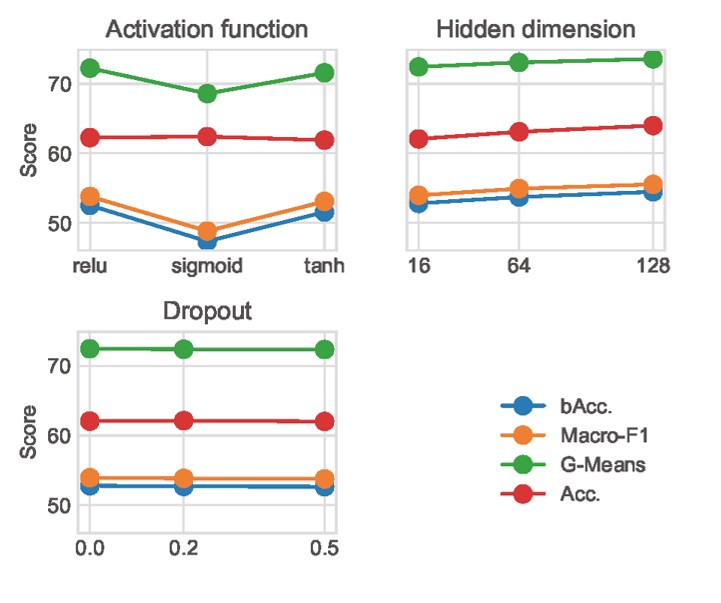}
    \caption{Hyperparameter analysis on Cora-Full.}
    \label{fig:hyperappendix}
\end{figure}

\begin{figure}[!ht]
     \centering
     \includegraphics[width=\linewidth]{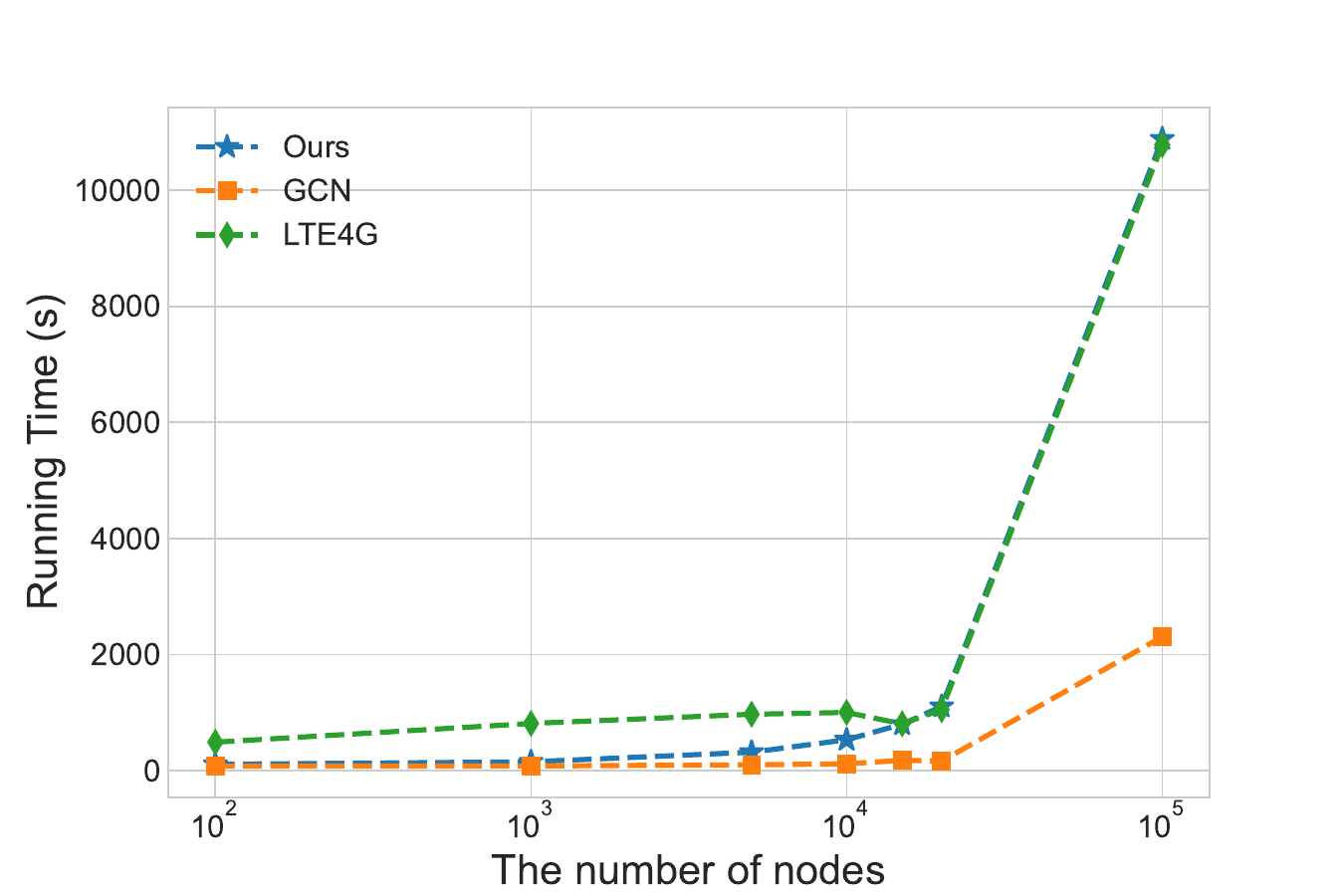}
     \caption{Time complexity analysis w.r.t. the number of nodes.}
     \label{fig:timecomplexity}
\end{figure}

\begin{figure}[!ht]
     \centering
     \includegraphics[width=\linewidth]{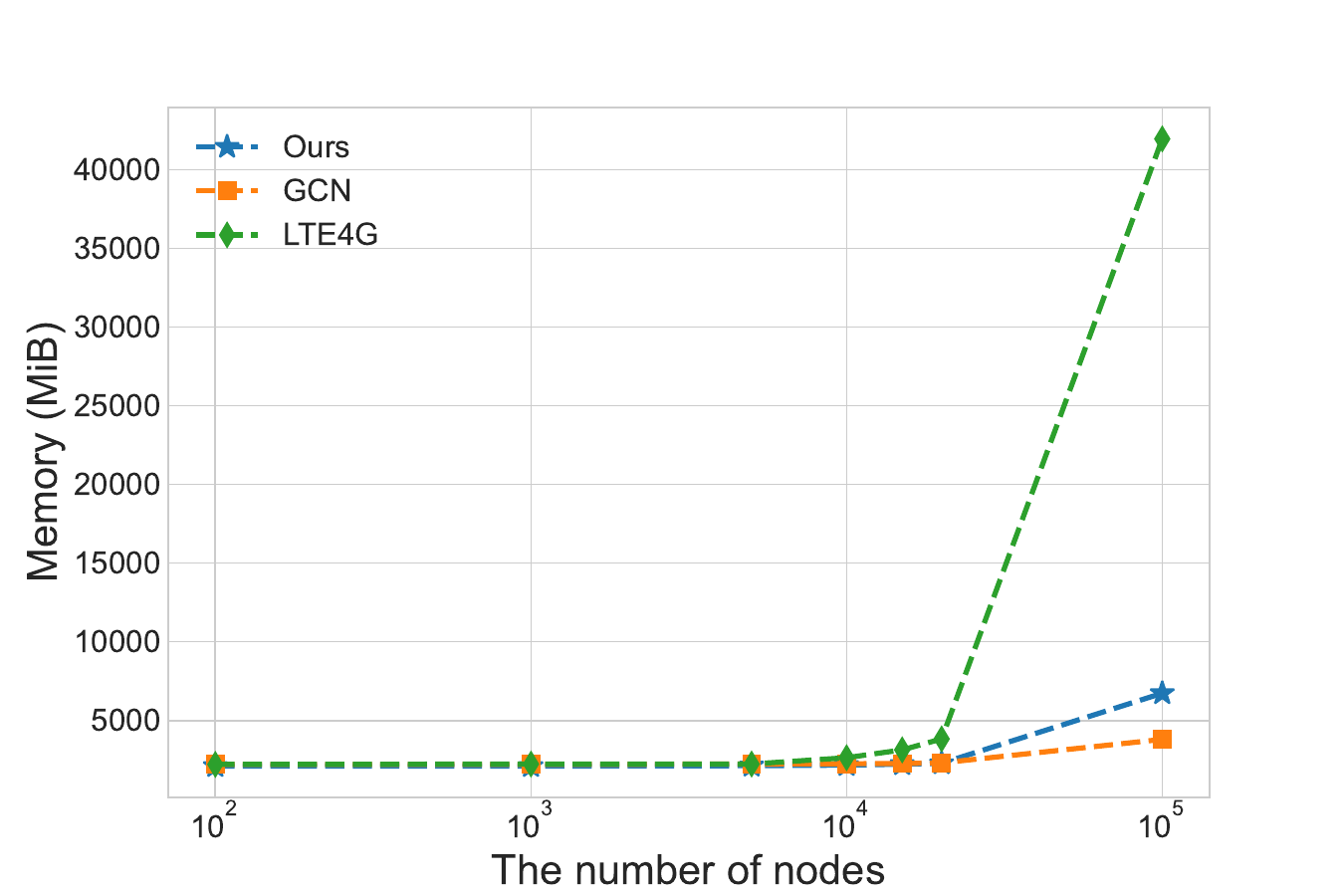}
     \caption{Space omplexity analysis w.r.t. the number of nodes.}
     \label{fig:spacecomplexity}
\end{figure}

\noindent\textbf{Complexity analysis:} We report the running time and memory usage of \name, GCN, and LTE4G (a efficient state-of-the-art method). For better visualization, we conduct experiments on synthetic datasets with an increasing graph size, i.e., from 100 to 100,000 nodes.
As depicted in Figure~\ref{fig:timecomplexity}, our approach \name\ consistently exhibits superior or similar running time compared to the LTE4G method. Although our method has slightly higher running time than GCN, the gap between our approach and GCN remains modest especially when for graph sizes smaller than $10^4$. The relationship between the running time of our model and the number of nodes is similarly linear.
The best space complexity of our method can reach $O(nd+d^2+|\mathcal{E}|)$, which is linear in the number of nodes and the number of edges. From the memory usage given in Figure~\ref{fig:spacecomplexity}, it is shown that \name\ exhibits significantly superior memory usage compared to LTE4G and closely approximates the memory usage of GCN. The results illustrate the scalability of our method.

\end{document}